\documentclass[twoside]{article}
\usepackage[dvipsnames]{xcolor}
\usepackage{graphicx}
\usepackage{amsthm}
\usepackage{amsmath}
\usepackage{mathtools}
\usepackage{mathrsfs}
\usepackage{amssymb}
\usepackage{etoolbox}
\usepackage[linesnumbered,lined,boxed,ruled]{algorithm2e}
\usepackage[section]{placeins}
\usepackage[breaklinks,colorlinks,citecolor=blue,linkcolor=blue,urlcolor=blue]{hyperref}
\usepackage[all]{hypcap}
\usepackage{cancel}
\usepackage{csquotes}
\usepackage[toc,page]{appendix}
\usepackage{multirow}
\usepackage{datetime}
\usepackage{times}
\usepackage[format=plain, labelfont={bf,it}, textfont=it]{caption}
\usepackage{subcaption}
\usepackage{enumitem}
\usepackage{tcolorbox}
\usepackage{natbib, bibentry}
\usepackage{orcidlink}
\usepackage{fancyhdr}
\usepackage{listings}
\usepackage{scalerel}
\usepackage{stackengine,wasysym}

\newdateformat{mdydate}{\monthname[\THEMONTH] \THEDAY, \THEYEAR}
\newdateformat{monthyeardate}{\monthname[\THEMONTH] \THEYEAR}
\newdateformat{vmonthyeardate}{\THEYEAR.\THEMONTH.\THEDAY}

\newcommand{\be}{\begin{equation}}
\newcommand{\ee}{\end{equation}}
\newcommand{\bea}{\begin{eqnarray}}
\newcommand{\eea}{\end{eqnarray}}

\newtheorem{proposition}{Proposition}

\usepackage{geometry}
\graphicspath{{figures/}}

\newcommand{\volume}{3}
\newcommand{\firstpage}{33}
\newcommand{\lastpage}{54}
\newcommand{\yyyy}{2026}
\newcommand{\mm}{September}
\newcommand{\dd}{28}
\newcommand{\authors}{Joel Pfeffer et al.}
\newcommand{\fulltitle}{Introducing the CZAR Loss: A Tailored Objective Function for Financial Log-Return Predictions}
\newcommand{\shorttitle}{CZAR loss function}

\newcommand{\doi}{10.70235/allora.0x\volume\ifnum\numexpr\firstpage<10 000\else\ifnum\numexpr\firstpage<100 00\else\ifnum\numexpr\firstpage<1000 0\fi\fi\fi\firstpage}
\fancypagestyle{firstpage}{%
    \fancyhead[L]{\footnotesize Allora Decentralized Intelligence \textbf{\volume}, \firstpage--\lastpage; \yyyy\ \mm\ \dd}
    \fancyhead[R]{\footnotesize doi:\href{https://doi.org/\doi}{\textcolor{blue}{\doi}}}
    \fancyfoot{}
    
}

\AtBeginDocument{\thispagestyle{firstpage}}

\begin{document}

\title{\fulltitle}
\author{\authors}
\date{\monthyeardate{\today}}

% START EDITING HERE

\vskip30mm
\begin{center}
\begin{minipage}{170mm}
\begin{center}
\vskip5mm
{\fontsize{15pt}{15pt}\textbf{Introducing the CZAR Loss: A Tailored Objective Function for Financial Log-Return Predictions}}
\vskip5mm
Joel Pfeffer$^{\orcidlink{0000-0003-3786-8818}}$$^{1}$,
J.~M.~Diederik Kruijssen$^{\orcidlink{0000-0002-8804-0212}}$,$^{1}$
Florian Stecker$^{\orcidlink{0000-0002-7687-5116}}$$^{1}$ \&
Steven~N.~Longmore$^{\orcidlink{0000-0001-6353-0170}}$$^{1,2}$
\vskip1mm
$^{1}$\textit{Allora Foundation},
$^{2}$\textit{Liverpool John Moores University}
\end{center}
\end{minipage}
\end{center}
\vspace{3mm}

\begin{abstract}
\noindent
In quantitative finance, standard regression losses are misaligned with the economics of return prediction.
As the conditional mean of financial log-returns is close to zero, symmetric losses such as the mean squared and mean absolute errors make the constant zero forecast a near-optimal solution, penalizing models with genuine but noisy directional skill. This applies both during training, where predictions shrink toward zero, and during evaluation, where trivial forecasters can lead loss-based rankings.
Under a Gaussian linear prediction model, we show that \textit{all} symmetric monotonic losses share a universal breakeven directional accuracy against the zero predictor, which rises sharply and becomes unobtainable as the prediction noise approaches the standard deviation of the returns.
We introduce the CZAR (Composite Zero-Agnostic Return) loss function, a piecewise quadratic loss built around five requirements derived from this analysis: convexity in the prediction, asymmetry oriented by the direction of the true return that vanishes at zero, near-linear penalization of undershoots and wrong-direction predictions, divergence for large errors, and an adaptive loss floor for evaluation.
CZAR is provably convex in the prediction at fixed true value, has closed-form gradient and Hessian suitable for custom objectives in gradient-boosted libraries, and its four hyperparameters reduce to a single choice through correlated defaults.
In idealized tests, the minimum directional accuracy required for a CZAR-evaluated forecaster to outperform the zero predictor under mean log loss remains near the $50\%$ chance level, whereas the corresponding threshold for symmetric losses rises sharply with prediction noise.
This advantage persists under heavy-tailed return distributions.
In a LightGBM experiment on intraday (15-minute and 1-hour) BTC log-returns, CZAR-trained models reduce the `zero-returns bias' of the L1 and L2 baselines and improve long--short performance and directional accuracy on large-magnitude returns.
\end{abstract}
\vspace{3mm}

% Start of the body of the paper

\section{Introduction} \label{sec:intro}

Predicting financial log-returns is an unusually hostile regression problem.
The predictable component of returns is small relative to the noise, return distributions are heavy-tailed \citep{blattberg74,gopikrishnan99,cont01}, and at most horizons the conditional mean return is close to zero.
Machine-learning models, most prominently gradient-boosted decision trees \citep{chen16,ke17} and deep networks \citep{gu20}, are nevertheless routinely trained on return targets using generic regression losses such as the mean squared error (MSE) or mean absolute error (MAE).
These point-error objectives are the default in general forecasting practice \citep{makridakis20,makridakis22}, yet they are designed for settings where the target carries substantial extractable signal.
The choice of loss function matters twice over.
During training, its gradient and curvature determine what the model learns; during evaluation, sample-averaged losses drive hyperparameter tuning, early stopping, and model selection.
A loss function misaligned with the actual prediction objective therefore corrupts both the models that are trained and the way they are compared.

For predicting log-returns, symmetric losses are misaligned in a specific and severe way, which we refer to as a \textit{zero-returns bias}.
Symmetric losses are minimized in expectation by central tendencies of the conditional return distribution (the conditional mean for MSE, the conditional median for MAE; \citealt{gneiting11}), and for log-returns these are close to zero.
A constant zero prediction therefore achieves a near-minimal expected loss, while a model with genuine directional skill but realistic prediction noise is penalized for the noise more than it is rewarded for the signal.
As we show in \S\ref{sec:background}, under a Gaussian linear prediction model a forecaster with directional accuracy well above chance can incur a \textit{higher} MAE and MSE than the trivial zero predictor, and the directional accuracy required to break even against zero returns follows a universal curve shared by \textit{all} symmetric monotonic losses, rising sharply as the prediction noise approaches the scale of the returns themselves.
This problem is not confined to MSE and MAE.
The conditional expectation is the optimal predictor for every loss in the Bregman family \citep{banerjee05}, so any such loss makes near-zero prediction an attractor whenever the conditional mean return is near zero.
Practically, the predictable component of returns recovered by machine-learning models is small, with out-of-sample $R^2$ values of at most a few percent \citep[e.g.][]{gu20,leippold22}.
A model trained with a symmetric loss responds by shrinking its predictions toward zero, a degenerate solution that minimizes training loss while destroying the forecast as a directional signal.
However, directional information is precisely what is economically valuable.
Forecast profitability is tied to directional accuracy rather than to conventional error measures \citep{leitch91,granger00}, directional accuracy is a standard evaluation criterion with well-developed tests \citep{pesaran92}, and even statistically weak return predictability can be economically significant \citep{kandel96}.

Asymmetric loss functions are the natural remedy, and they have a long history.
\citet{granger69} showed that a non-quadratic cost of error renders the optimal forecast biased away from the conditional mean; \citet{zellner86} introduced the LINEX loss; \citet{christoffersen97} derived the optimal (time-varying) bias under asymmetric loss and conditional heteroskedasticity; and \citet{elliott05} and \citet{patton07} developed estimation and testing under flexible families of asymmetric loss.
On the training side, the pinball loss of quantile regression \citep{koenker78} and the asymmetric squared loss of expectile regression \citep{newey87} are the canonical asymmetric objectives, and the Huber loss \citep{huber64} the canonical robust one.
None of these resolve the zero-returns bias, for two reasons.
First, their asymmetry is fixed and oriented by the sign of the prediction error.
They uniformly prefer over- (or under-) prediction everywhere, which merely relocates the optimal predictor from the conditional mean to a fixed quantile or expectile of the conditional distribution \citep{gneiting11}.
This results in a uniform bias rather than a removal of the attractor, and one that imposes a directional preference even when the true return is zero, where none is justified.
What return prediction requires is instead asymmetry oriented by the \textit{direction of the true return}.
Overshooting a correctly-signed prediction is nearly harmless, while undershooting or predicting the wrong direction is not, and the asymmetry should vanish as the true return approaches zero and strengthen as its magnitude grows.
Second, as we show in \S\ref{sec:background}, a constant asymmetry strong enough to materially lower the breakeven directional accuracy introduces pathologies of its own.
Under log-averaged evaluation (ranking by the mean of the logarithm of the losses, equivalently their geometric mean, an established choice for aggregating positive error measures that span orders of magnitude, as it limits domination by large-magnitude samples \citep{fleming86,armstrong92,hyndman06}) it can push the breakeven below $50\%$, so that a model that is systematically \textit{wrong} about direction outranks the zero predictor.

A complementary strand of work abandons statistical losses altogether in favor of economic objectives.
This strand includes trading systems trained by direct maximization of the Sharpe ratio \citep{moody01,zhang20}, decision-focused learning that trains the predictor on the downstream optimization objective \citep{elmachtoub22}, no-arbitrage moment conditions used as adversarial losses \citep{chen24}, and bespoke asymmetric or direction-aware losses for asset-return prediction \citep{dessain23,michankow24}.
This literature confirms that replacing symmetric statistical losses with finance-aware objectives improves trading-relevant performance, but the resulting objectives are typically non-convex, tied to a particular portfolio construction, or lacking the analytical gradient and Hessian required of a custom objective in gradient-boosted libraries \citep{chen16,ke17}.
What is missing is a general-purpose loss for return prediction that is convex in the prediction, directly trainable, and constructed explicitly so that neither the zero predictor nor a wrong-direction predictor can outrank a genuinely informative model.

In this paper we introduce the CZAR (Composite Zero-Agnostic Return) loss function, which is designed to fill this gap.
Our contributions are as follows.
(i)~From an analysis of symmetric and asymmetric losses under a Gaussian linear prediction model, we derive five requirements that a loss function adequate for return prediction must satisfy: convexity in the prediction, direction-oriented asymmetry that is symmetric at a zero true return, near-linear penalization of undershoots and wrong-direction predictions, divergence for large errors, and an adaptive loss floor near the mean return (\S\ref{sec:background}).
(ii)~We construct CZAR, a piecewise quadratic loss with an asymmetry that adapts to the magnitude of the true return and an evaluation-time loss floor, satisfying all five requirements, and reduce its four hyperparameters to effectively one through correlated defaults (\S\ref{sec:czar}).
(iii)~We prove that CZAR is convex in the prediction at fixed true value (\S\ref{sec:czar:derivatives}).
(iv)~We derive the analytical gradient and Hessian, making CZAR directly usable as a training objective in gradient-boosted frameworks (\S\ref{sec:czar:derivatives}, with a reference implementation in Appendix~\ref{app:code}).
(v)~In idealized tests, we show that CZAR's log-loss breakeven directional accuracy remains near the $50\%$ chance level as prediction noise increases, whereas the corresponding thresholds for symmetric and constant-asymmetry loss functions rise sharply. This advantage persists under heavy-tailed return distributions, and rankings by CZAR loss track directional skill rather than prediction variance.
We then evaluate CZAR in a LightGBM experiment on intraday (15-minute and 1-hour) BTC log-returns, where CZAR-trained models reduce the prediction shrinkage of L1- and L2-trained baselines and improve long--short performance and directional accuracy on large-magnitude returns (\S\ref{sec:evaluation}).
\S\ref{sec:discussion} discusses implications and limitations.

%%%%%%%%%%%%%%%%%%%%%%
\section{Background}
\label{sec:background}

\begin{figure}[t]
\centering
\includegraphics[width=0.99\textwidth]{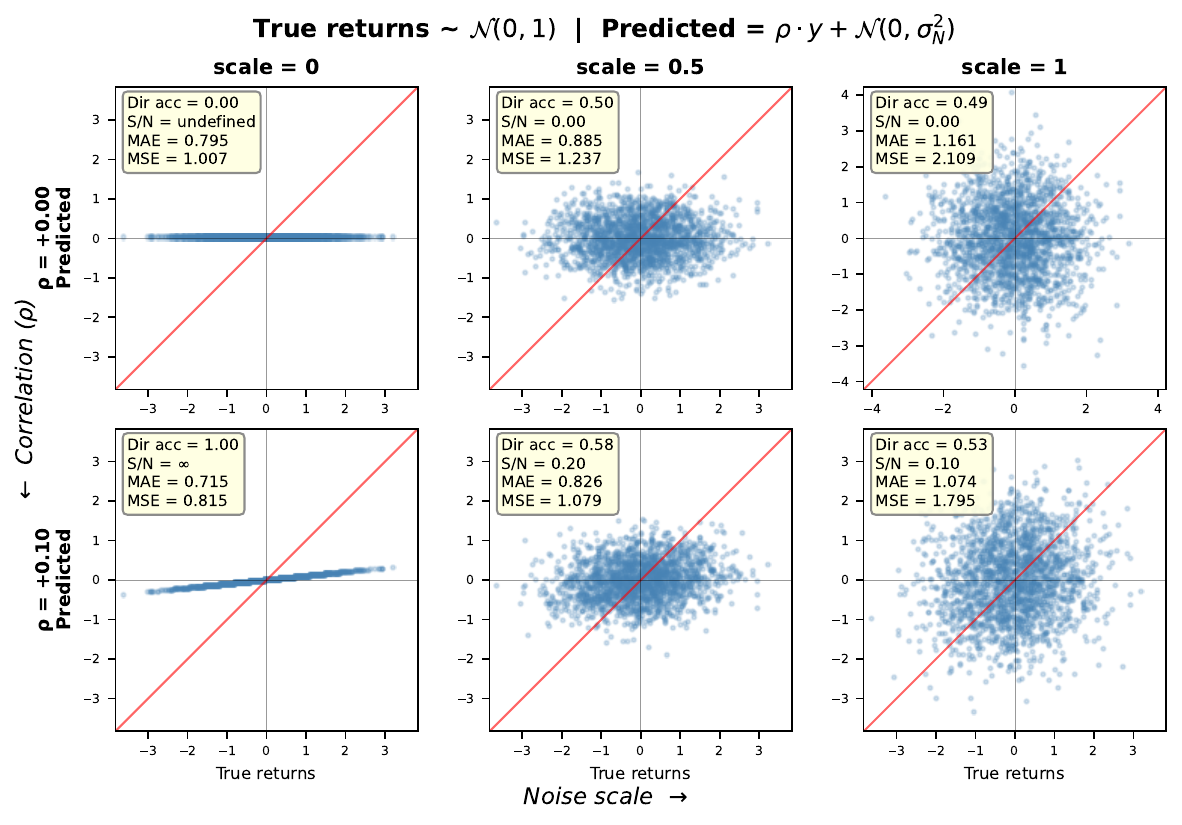}
\caption{Scatter plots of predicted vs.\ true returns under the Gaussian linear model $\hat{y} = \rho\,y + \mathcal{N}(0,\sigma_N^2)$ for signal coefficient $\rho \in \{0, 0.1\}$ (rows) and noise scale $\sigma_N \in \{0, 0.5, 1\}$ (columns). The red diagonal marks perfect prediction. Directional accuracy (DA), signal-to-noise ratio, mean absolute error, and mean squared error are annotated in each panel. The top-left panel ($\rho=0, \sigma_N=0$) is the zero-returns predictor with $\hat{y}\equiv 0$, collapsing all predictions to the horizontal axis. The remaining $\rho=0$ panels are pure-noise predictors $\hat{y}=\sigma_N\varepsilon$. Despite zero DA, the zero-returns predictor achieves lower MAE (0.795) and MSE (1.007) than the signal-carrying predictor at $\rho=0.1, \sigma_N=0.5$ (MAE~0.826, MSE~1.079), illustrating the zero-returns attractor problem.}
\label{fig:returns_model}
\end{figure}

As a starting point for evaluating loss functions for returns predictions, we consider a simple case where the underlying true returns follow a Gaussian distribution with mean at zero and scale of unity, $y = \mathcal{N}(\mu=0,\sigma=1)$.
Predicted returns then follow the format of Gaussian linear predictions, $\hat{y} = \rho y + \mathcal{N}(0, \sigma_N^2)$, where $\rho$ is the signal coefficient and $\sigma_\mathrm{N}$ is the noise scale, relative to the true distribution (i.e. for a fixed true return $y$, the prediction is drawn from a second Gaussian, whose additive noise is independent of $y$).
Note that $\rho$ is the regression slope, not the Pearson correlation between $\hat{y}$ and $y$; the latter is $\rho/\sqrt{\rho^2+\sigma_N^2}$.
Example distributions are shown in Figure~\ref{fig:returns_model} for increasing signal coefficient and noise scales.
In the following, we refer to the single case $\rho=0, \sigma_\mathrm{N}=0$ (so that $\hat{y}\equiv 0$) as the `zero-returns' model and the case $\rho=0, \sigma_\mathrm{N}=1$ (a pure-noise predictor whose marginal distribution matches the true returns PDF) as the `same-PDF' model. The remaining $\rho=0, \sigma_N>0$ panels in Figure~\ref{fig:returns_model} are pure-noise predictors, $\hat{y}=\sigma_N\varepsilon$, not zero-returns.
In Figure~\ref{fig:returns_model}, the directional accuracy, signal-to-noise ratio (which directly correlates with directional accuracy in this case) and mean loss values for MAE and MSE loss functions are indicated in each panel.
For $\rho=0$ with $\sigma_N>0$ (pure noise), directional accuracy is $\approx 50$\% (within some error due to sampling); for the zero-returns predictor ($\rho=0, \sigma_N=0$) every prediction is exactly zero and is reported as a directional miss, giving DA $=0$ in the top-left panel. For $\rho > 0$ directional accuracy decreases with increasing $\sigma_\mathrm{N}$.
A key point to note is that both MAE and MSE have lower losses for the zero-returns model than for the signal-carrying model with $\rho=0.1$ and $\sigma_\mathrm{N} = 0.5$; i.e., noise in the predictions is more important than the signal itself.
When evaluating the performance of models (e.g. when training and optimizing machine learning models to predict returns), the zero-returns case therefore becomes an attractor solution.

\begin{figure}[t]
\centering
\includegraphics[width=0.46\textwidth]{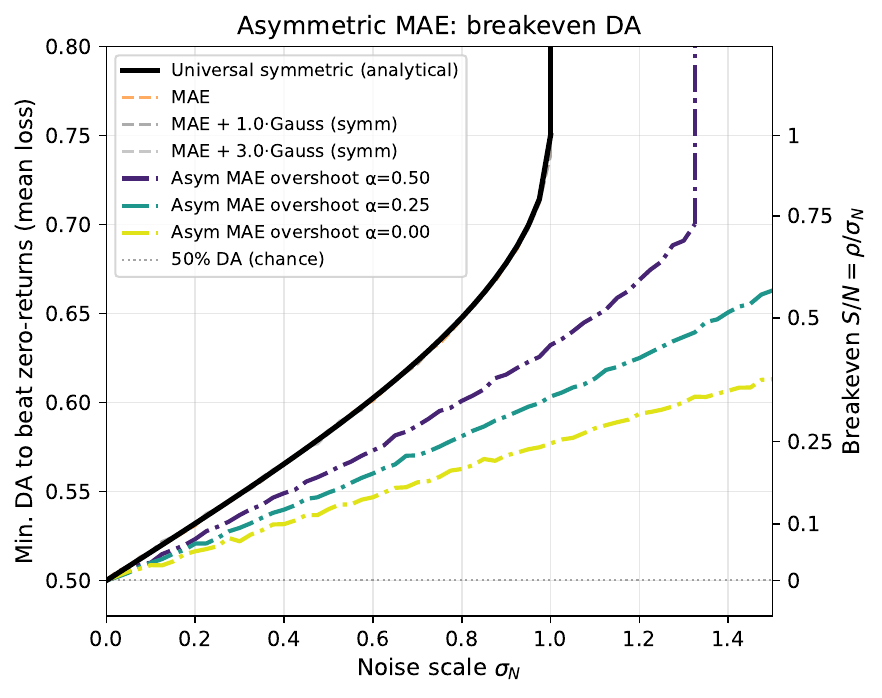}
\includegraphics[width=0.46\textwidth]{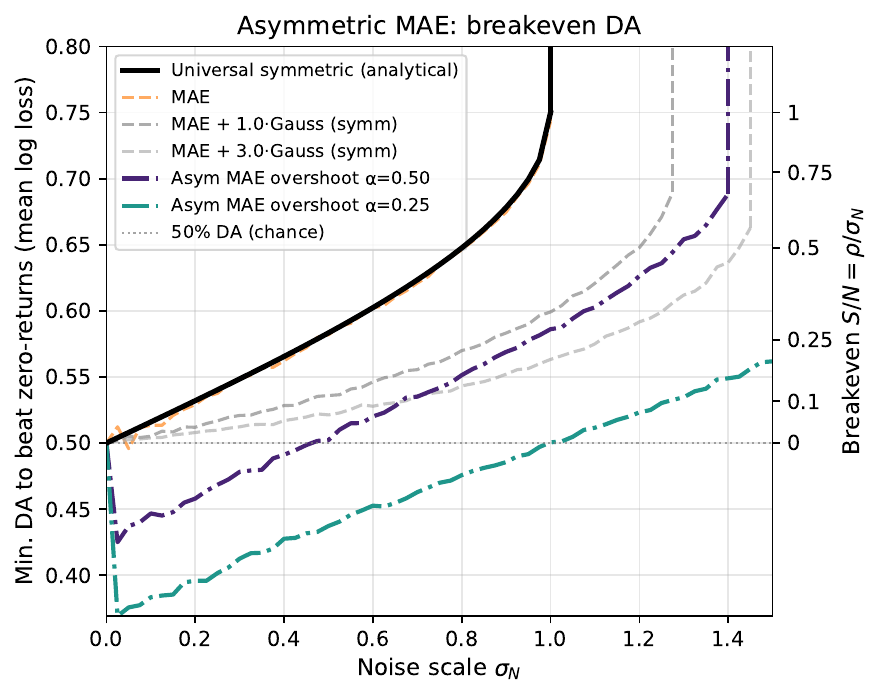}
\includegraphics[width=0.46\textwidth]{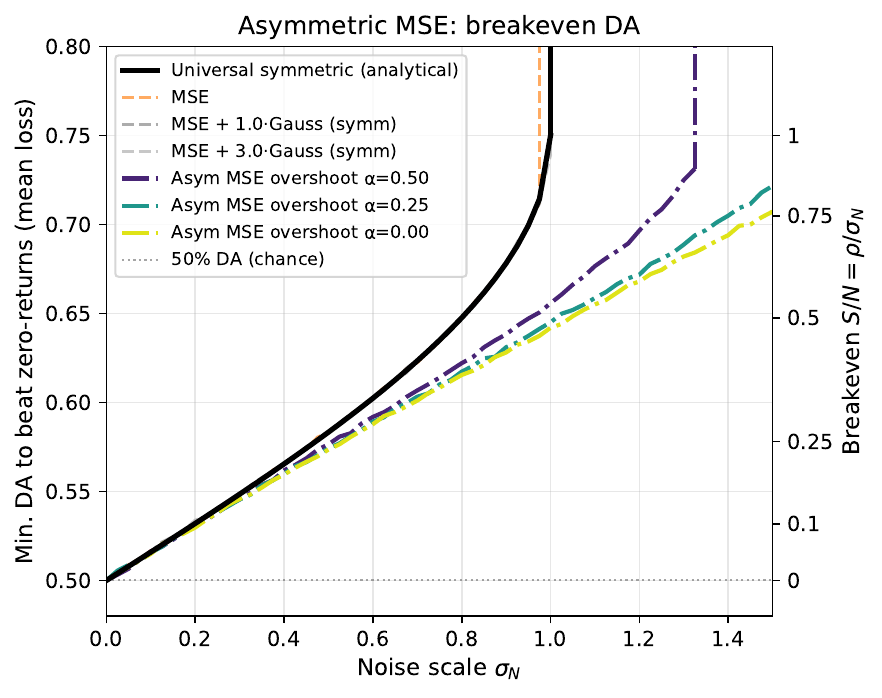}
\includegraphics[width=0.46\textwidth]{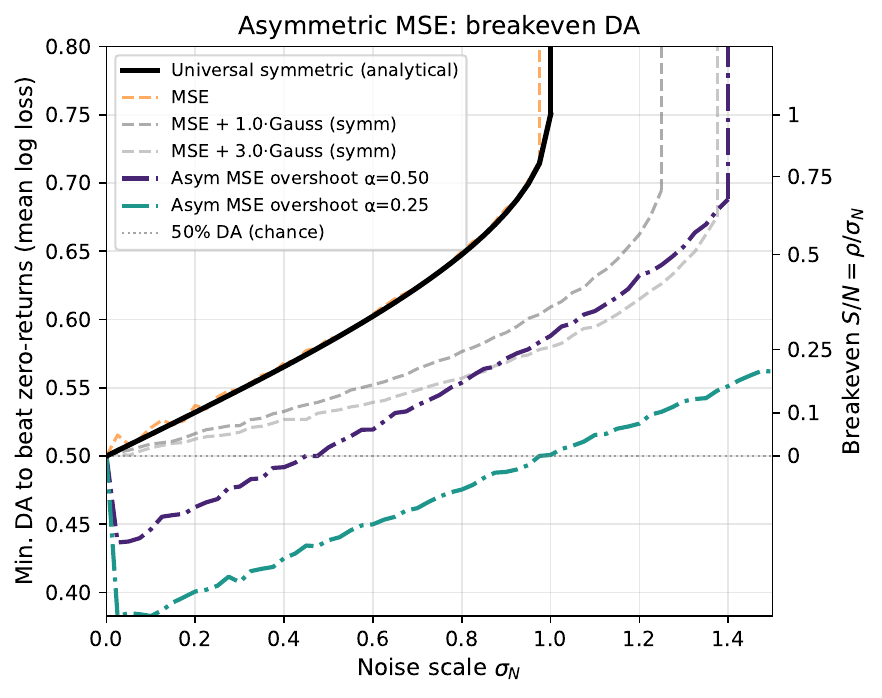}
\caption{Breakeven directional accuracy vs.\ noise scale $\sigma_N$ for asymmetric MAE (top row) and asymmetric MSE (bottom row), evaluated under mean linear loss (left) and mean log loss (right). Black solid lines show the universal symmetric analytical breakeven, orange dashed lines show symmetric MAE or MSE baseline, gray dashed lines show symmetric loss with an additive Gaussian floor ($\lambda \in \{1, 3\}$), and colored dash-dotted lines show asymmetric loss downweighting overshoots by factor $\alpha \in \{0, 0.25, 0.5\}$ (linear panels) or $\alpha \in \{0.25, 0.5\}$ (log panels). In all panels, the relations become vertical when  Gaussian linear models cannot achieve lower losses than the zero-returns model. \textnormal{Left panels:} For linear losses, asymmetry reduces the breakeven DA while remaining $>50\%$ for all $\sigma_N$. A linear undershooting term (MAE, $\alpha=0$, upper left panel) achieves far lower breakeven than a quadratic one (MSE, lower left panel), and the $\alpha$ sensitivity is negligible for MSE. \textnormal{Right panels:} For log losses, asymmetry pushes the breakeven below $50\%$ at small $\sigma_N$, meaning a model predicting consistently in the wrong direction could achieve a lower mean log loss than the zero-returns predictor, a clearly undesirable property. The Gaussian floor instead keeps the breakeven $\geq 50\%$ while reducing it at intermediate $\sigma_N$.}
\label{fig:breakeven_DA}
\end{figure}

This series of models can be used to calculate a `breakeven' directional accuracy, which occurs when the expectation values are identical, i.e. $\mathbb{E}\!\left[\mathcal{L}(y,\hat{y})\right] = \mathbb{E}\!\left[\mathcal{L}(y,0)\right]$.
That is, for a given parameter (e.g. noise scale) what is the directional accuracy needed to achieve the same loss as the zero-returns model.
We show this for the MAE and MSE loss functions in Figure~\ref{fig:breakeven_DA}, with breakeven directional accuracy as a function of noise scale.
Critically, for a Gaussian true returns distribution, monotonic loss functions and Gaussian linear prediction models, symmetric loss functions follow a universal breakeven function, $\mathrm{DA} = 0.5 + \arctan(\rho_\mathrm{min} \sigma / \sigma_\mathrm{N}) / \pi$ with $\rho_\mathrm{min} = 1 - \sqrt{1 - (\sigma_\mathrm{N}/\sigma)^2}$, valid for $0 \le \sigma_\mathrm{N} \le 1$ (both for evaluation with mean losses and the mean logarithm of the losses).
We derive this result in Appendix~\ref{app:breakeven}.
At a noise scale $\sigma_\mathrm{N} > 1$ (with $\sigma=1$), Gaussian linear models cannot achieve a lower loss than the zero-returns model, regardless of their directional accuracy.

What can improve the breakeven point is using asymmetric functions, which reward overestimates relative to underestimates.
This is shown by the dash-dotted colored lines in Figure~\ref{fig:breakeven_DA}, which use a constant $\alpha$ term to downweight losses for overestimates (analogous to quantile or LinLin loss functions).
For the MSE loss we set $\alpha^2$ scaling for equivalency with the MAE loss.
For evaluation with mean (linear) losses (left panels in the figure), smaller $\alpha$ lowers the gradient in the $\sigma_\mathrm{N}$-breakeven plane, but the breakeven points always remain $>50$\%.
The minimal breakeven point of course occurs when $\alpha=0$, i.e. there is no punishment for overshoots in the correct direction.
What is clear from comparing the MAE and MSE panels is that, for evaluation of mean losses, a linear term on the undershooting side is preferred over a quadratic term, since MAE with $\alpha=0$ achieves far lower breakeven directional accuracy than MSE.

Taking the mean log loss\footnote{Throughout, ``mean log loss'' (and ``log loss'' more generally) denotes log-averaged evaluation, i.e. the mean of the logarithm of the per-sample losses or equivalently the logarithm of their geometric mean. This is unrelated to the cross-entropy loss of classification, which is also commonly called `log loss'.} values for evaluation creates concave loss functions, which changes how asymmetric functions affect breakeven directional accuracy.
For evaluation with mean log losses, the asymmetry term $\alpha$ acts as a vertical scaling.
In this case, for small noise scales the breakeven point can occur at $<50$\%; i.e. models could be optimized to \textit{incorrectly} predict returns, which obviously should be avoided.
For mean log losses, reduction in breakeven directional accuracy can be achieved by adding a `floor' term to the loss function that peaks at $y = 0$ and decreases with increasing $|y|$.
We show this as the gray dashed lines in Figure~\ref{fig:breakeven_DA}, which adds a floor following a scaled Gaussian, $\lambda \exp(-y^2)$, to the symmetric MAE loss.
This achieves a similar reduction in breakeven directional accuracy for mean log losses as asymmetry does for mean losses.
The floor term has no effect on mean loss since it is independent of the predicted values and cancels from the breakeven inequality.

These tests, together with the demands of gradient-based training (a custom objective in gradient-boosted libraries must supply a well-behaved gradient and Hessian, which favors convexity in the prediction) and the need to adequately penalize extreme outliers, motivate five requirements for a loss function designed for return prediction. Such a loss should:
\begin{enumerate}[label=(\roman*)]
    \item be convex in the prediction $\hat{y}$ at fixed true value $y$ (joint convexity is not required), making it suitable as a training objective for gradient-boosted models;
    \item be asymmetric, so that correct-direction overshoots are punished less than undershoots or wrong-direction predictions for the same fixed error, $|\hat{y} - y|$, while becoming symmetric when the true return is zero ($y=0$);
    \item have a near-linear loss term for undershoots and wrong-direction predictions;
    \item diverge as the prediction error grows, so that $\mathcal{L}\to\infty$ for large errors and extreme outliers are adequately penalized; and
    \item have an adaptive loss floor that raises the loss for samples whose true returns lie near the distribution mean, thereby reducing the relative weight of these samples under log-averaged evaluation.
\end{enumerate}

%%%%%%%%%%%%%%%%%%%%%%%%%%%%
\section{CZAR loss function}
\label{sec:czar}

The CZAR (Composite Zero-Agnostic Return) loss function is convex in the prediction $\hat{y}$ at fixed true value $y$ (piecewise quadratic on either side of a region boundary at $\hat{z}=z$), designed around a single asymmetry: a prediction that overshoots the true return in the correct direction should be penalized less than one that undershoots or points in the wrong direction.
This section presents the core formulation.
Appendix~\ref{app:smoothing} describes an optional pseudo-Huber smoothing extension for undershooting (Region A); in preliminary experiments it did not improve performance, and it is not used in any result reported in this paper.

\subsection{Notation} \label{sec:czar:notation}

Given a true return $y$ and predicted return $\hat{y}$ with known distributional mean $\mu$ and standard deviation $\sigma$\footnote{$\mu$ and $\sigma$ are not fitted parameters of the loss but externally supplied properties of the target (true-return) distribution, treated as known constants for each sample. In practice they are estimated from recent history of the realized returns and the same values are used during training and evaluation. In our experiments (\S\ref{sec:eval:training}) $\sigma$ is a $100$-candle rolling standard deviation of realized returns and $\mu=0$, the natural choice for log-returns at short horizons where the mean is negligible relative to the volatility. However, we retain $\mu$ in the notation for generality.}, define standardized $z$-score variables
\begin{equation} \label{eq:standardize}
    z = \frac{y-\mu}{\sigma}, \qquad \hat{z} = \frac{\hat{y}-\mu}{\sigma},
\end{equation}
and let $\Delta z = |\hat{z} - z|$ denote the unsigned prediction error in standardized units. The key adaptive quantity is the asymmetry factor
\begin{equation} \label{eq:beff}
    \beta_{\mathrm{eff}}(|z|) = \frac{1}{1+\beta |z|} \in (0,\,1],
\end{equation}
which equals $1$ at $z=0$ and decays to $0$ as $|z| \to \infty$, parameterized by $\beta \ge 0$.
In the base loss below it multiplies the quadratic overshoot penalty, while its complement $1-\beta_{\mathrm{eff}}$ weights the linear undershoot term.
Its two limits implement requirements (ii)–(iii) of \S\ref{sec:background}.
At $z=0$ the loss is exactly symmetric, and as $|z|$ grows the asymmetry strengthens, so overshooting a large true move in the correct direction becomes nearly free while undershooting it is penalized at full linear weight.

\subsection{Base loss} \label{sec:czar:base}

The base loss splits into two regions determined by whether the prediction overshoots the true value:
\begin{enumerate}[label=\textbf{Region \Alph*}, labelindent=\parindent, leftmargin=*, align=left, widest*=2]
    \item ($\operatorname{sign}(\hat{z}) \neq \operatorname{sign}(z)$ or $|\hat{z}| \le |z|$): prediction does not overshoot; either wrong direction or correct direction but undershooting.
    \item ($\operatorname{sign}(\hat{z}) = \operatorname{sign}(z)$ and $|\hat{z}| > |z|$): prediction overshoots; correct direction with magnitude exceeding $|z|$.
\end{enumerate}
\begin{equation}
    \mathcal{L}_{\mathrm{base}}(z, \hat{z}) =
    \begin{cases}
        (1-\beta_{\mathrm{eff}})\,\Delta z + \dfrac{\alpha}{2}\,\Delta z^2
            & \text{(Region A)}, \\[10pt]
        \beta_{\mathrm{eff}}\,\dfrac{\alpha}{2}\,\Delta z^2
            & \text{(Region B)},
    \end{cases}
    \label{eq:czar-base}
\end{equation}
where $\alpha > 0$ is the quadratic penalty scale, applied uniformly across both regions.
This base loss satisfies the first four requirements from \S\ref{sec:background}.
At $z = 0$, $\beta_{\mathrm{eff}} = 1$ and both regions collapse to the symmetric quadratic $\mathcal{L}_{\mathrm{base}}\big|_{z=0} = \tfrac{\alpha}{2}\,\Delta z^2$ (with $\operatorname{sign}(0)=0$, every prediction formally falls in Region~A, but the distinction is immaterial since the two expressions coincide); as $|z|$ grows, $\beta_{\mathrm{eff}}$ decays and the asymmetry between regions increases.
The quadratic terms in both regions guarantee convexity in $\hat{y}$ (at fixed $y$) and unbounded growth for large prediction errors.
We accordingly restrict the CZAR domain to $\alpha > 0$.
At $\alpha = 0$ Region~B is identically zero (no penalty for overshoot in the correct direction), and Region~A reduces to a pure MAE-like $(1-\beta_{\mathrm{eff}})\,\Delta z$, so the unbounded-growth requirement from \S\ref{sec:background} fails on the overshoot side and the loss admits a one-parameter degenerate family of zero-loss predictions.
The correlated defaults compound the degeneracy because $\beta^*(0)=0$ and $C^*(0)=0$ collapse both the asymmetry and the loss floor, leaving the base loss flat in $\hat{y}$.
We therefore treat $\alpha\to 0$ only as a limiting or diagnostic regime (e.g.\ the small-$\alpha$ entries in Figure~\ref{fig:czar-grad-panel} and Table~\ref{tab:model-training} below stay strictly positive at $\alpha\ge 10^{-3}$), not as a CZAR operating point.

\begin{figure}[t]
    \centering
    \includegraphics[width=0.99\textwidth]{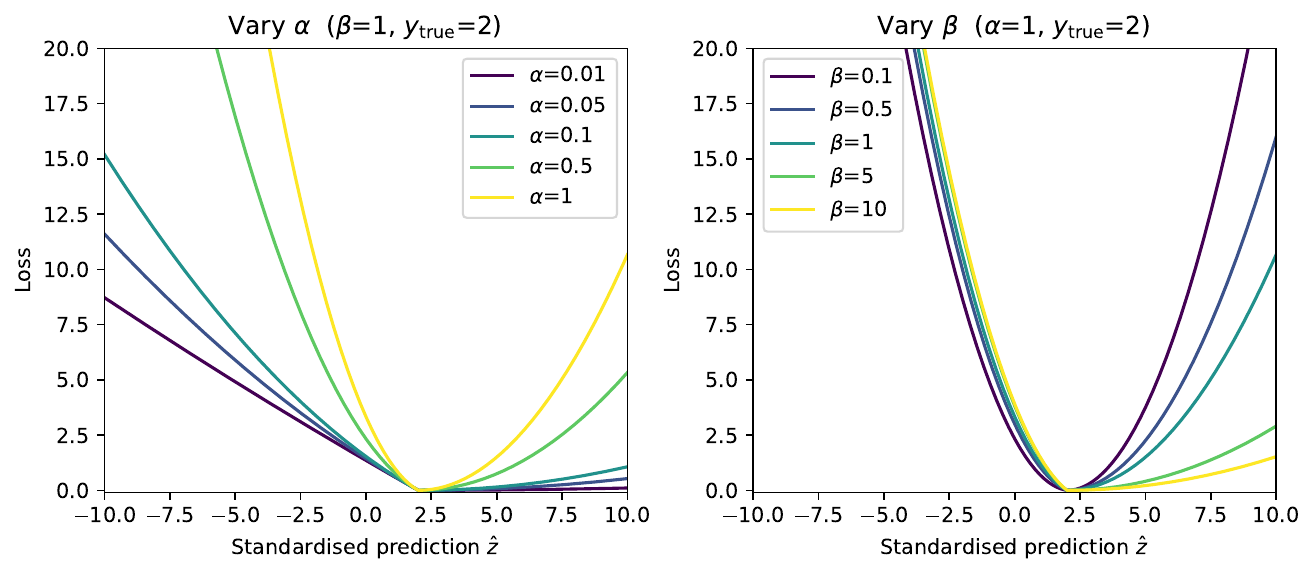}
    \caption{$\mathcal{L}_{\mathrm{CZAR}}$ as a function of standardized prediction~$\hat{z}$ for fixed true value $z=2$, showing sensitivity to the two primary hyperparameters. The left panel shows varying $\alpha$ at fixed $\beta=1$. Larger $\alpha$ steepens the quadratic penalty in both regions. The right panel shows varying $\beta$ at fixed $\alpha=1$. Larger $\beta$ suppresses Region~B (overshoot, $\hat{z}>z$), making the asymmetry more pronounced.}
    \label{fig:czar-params}
    \end{figure}
    
Figure~\ref{fig:czar-params} shows the sensitivity of the loss profile for a fixed true value $z=2$.
Here, $\alpha$ controls the overall quadratic scale uniformly across both regions, while $\beta$ governs the asymmetry by setting how quickly $\beta_{\mathrm{eff}}$ decays with $|z|$.

\subsection{Gradient and Hessian} \label{sec:czar:derivatives}

Let $s_\Delta = \operatorname{sign}(\hat{z}-z)$, set to $0$ at $\hat{z}=z$. Differentiating Equation~\eqref{eq:czar-base} with respect to $\hat{y} = \mu + \sigma\hat{z}$:
\begin{equation}
    \frac{\partial \mathcal{L}_{\mathrm{base}}}{\partial \hat{y}}
    = \frac{s_\Delta}{\sigma}
    \begin{cases}
        (1-\beta_{\mathrm{eff}}) + \alpha\,\Delta z
            & \text{(Region A)}, \\[6pt]
        \beta_{\mathrm{eff}}\,\alpha\,\Delta z
            & \text{(Region B)}.
    \end{cases}
    \label{eq:czar-grad}
\end{equation}
The second derivative is
\begin{equation}
    \frac{\partial^2 \mathcal{L}_{\mathrm{base}}}{\partial \hat{y}^2}
    = \frac{1}{\sigma^2}
    \begin{cases}
        \alpha
            & \text{(Region A)}, \\[6pt]
        \beta_{\mathrm{eff}}\,\alpha
            & \text{(Region B)}.
    \end{cases}
    \label{eq:czar-hess}
\end{equation}
The MAE term in Region~A contributes no curvature, so the Region~A Hessian is simply $\alpha/\sigma^2$.
At $\hat{z}=z$ with $z\neq 0$ the gradient has a step discontinuity of magnitude $(1-\beta_{\mathrm{eff}})/\sigma$.
The Region~A side-limit is $-\operatorname{sign}(z)\,(1-\beta_{\mathrm{eff}})/\sigma$ and the Region~B side-limit is $0$.
For $z>0$, Region~A lies left of the region boundary, so the gradient jumps from $-(1-\beta_{\mathrm{eff}})/\sigma$ up to $0$ as $\hat{z}$ increases through $z$.
For $z<0$ the sides reverse and the gradient jumps from $0$ up to $+(1-\beta_{\mathrm{eff}})/\sigma$.
In both signs the one-sided jump $\partial^+\mathcal{L}/\partial\hat{y} - \partial^-\mathcal{L}/\partial\hat{y} = (1-\beta_{\mathrm{eff}})/\sigma \ge 0$ is upward.
This concentrates the reward signal near the true value.
The loss pulls strongly toward $z$ from the Region~A side, while overshoot is penalized only quadratically.

For each fixed $z$, $\mathcal{L}_{\mathrm{base}}(z,\cdot)$ is convex in $\hat{y}$ (equivalently in $\hat{z}$).
On each open side of the region boundary the derivative $\partial\mathcal{L}_{\mathrm{base}}/\partial\hat{y}$ is affine in $\hat{z}$ with non-negative slope (the Hessian is $\alpha/\sigma^2$ on Region~A and $\beta_{\mathrm{eff}}\alpha/\sigma^2$ on Region~B, both $\ge 0$), and at $\hat{z}=z$ the left and right one-sided derivatives satisfy $\partial^+\mathcal{L}/\partial\hat{y} \ge \partial^-\mathcal{L}/\partial\hat{y}$ (upward jump $(1-\beta_{\mathrm{eff}})/\sigma\ge 0$ established above).
The subdifferential $\partial\mathcal{L}_{\mathrm{base}}(z,\cdot)$ is therefore monotone non-decreasing in $\hat{y}$, which is sufficient for convexity in $\hat{y}$ at fixed $y$. 
We also note that joint convexity in $(y,\hat{y})$ is not claimed and does not hold, since $\beta_{\mathrm{eff}}$ depends on $|z|$.

\begin{figure}[t]
    \centering
    \includegraphics[width=0.48\textwidth]{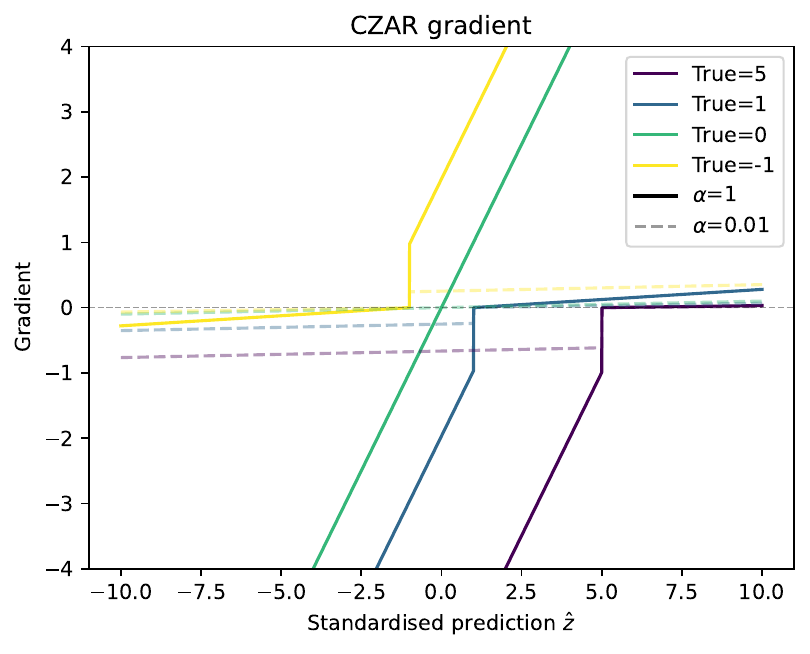}
    \hfill
    \includegraphics[width=0.48\textwidth]{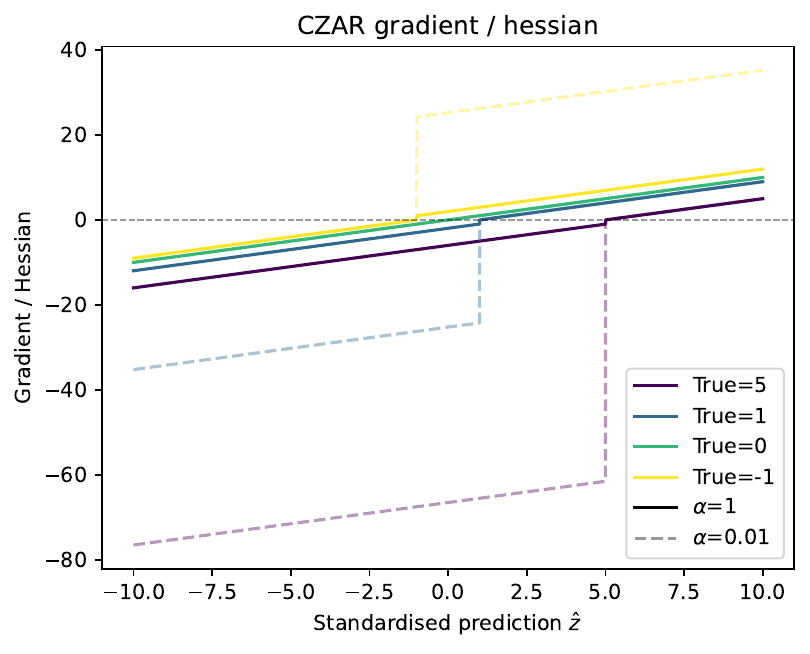}
    \caption{Gradient and Newton step of $\mathcal{L}_{\mathrm{CZAR}}$ for standardized true values $z \in \{-1, 0, 1, 5\}$. The left panel shows the gradient $\sigma\,\partial\mathcal{L}_{\mathrm{CZAR}}/\partial\hat{y}$ vs.\ standardized prediction~$\hat{z}$. Each curve has a step discontinuity at $\hat{z}=z$ of magnitude $1-\beta_{\mathrm{eff}}(|z|)$. The gradient approaches $-\operatorname{sign}(z)\,(1-\beta_{\mathrm{eff}})$ from Region~A and $0$ from Region~B, so it always jumps upward as $\hat{z}$ increases through $z$. For $z>0$ (e.g.\ $z=1,5$) Region~A lies to the left of the region boundary and the gradient jumps from $-(1-\beta_{\mathrm{eff}})$ up to $0$. For $z<0$ (e.g.\ $z=-1$) Region~A lies to the right and the jump runs from $0$ up to $+(1-\beta_{\mathrm{eff}})$. For large $|z|$ the step grows as $\beta_{\mathrm{eff}}\to 0$, while for $z=0$ ($\beta_{\mathrm{eff}}=1$) the gradient is continuous and linear (MSE-like), with no discontinuity. The right panel shows the Newton step $(\partial\mathcal{L}/\partial\hat{y})\,/\,(\partial^2\mathcal{L}/\partial\hat{y}^2)$ vs.\ $\hat{z}$. In Region~B the factor $\beta_{\mathrm{eff}}\alpha$ is shared by the gradient and Hessian and cancels in the ratio, leaving a Newton step of $\sigma,\Delta z = \hat{y}-y$ that is independent of $\alpha$ and $\beta_{\mathrm{eff}}$ (i.e. the overshoot damping seen in the loss and gradient does not carry over to the Newton step). In Region~A the constant Hessian ($\alpha/\sigma^2$) gives a Newton step $\sigma,\Delta z + \sigma(1-\beta_{\mathrm{eff}})/\alpha$, i.e. the MSE step plus a constant offset from the MAE term that grows as $|z|$ increases and as $\alpha$ shrinks. In both panels, solid lines show $\alpha=1$ and dashed lines show $\alpha=0.01$. Both use the correlated asymmetry term $\beta=\beta^*(\alpha)$ from Equation~\eqref{eq:correlated-beta} and floor level $C=C^*(\alpha)$ from Equation~\eqref{eq:correlated-C} (with $\tau=0.5$ and $\mu=0$).}
    \label{fig:czar-grad-panel}
\end{figure}
    
The gradient and Newton step are shown in Figure~\ref{fig:czar-grad-panel}.
The step discontinuity is visible for all $z\neq 0$, while at $z=0$, $\beta_{\mathrm{eff}}=1$ and the gradient is continuous and linear throughout.
At fixed $\beta$, varying $\alpha$ would change the post-step slope while leaving the step height unchanged, since the step arises from the MAE term rather than the quadratic.
Under the correlated default $\beta^*(\alpha)$, however, smaller $\alpha$ also reduces $\beta$, so the step height $1-\beta_{\mathrm{eff}}(|z|)$ shrinks alongside the slope, visible as the much smaller steps for the dashed ($\alpha=0.01$, $\beta\approx 0.32$) curves compared with the solid ($\alpha=1$, $\beta\approx 31.2$) curves.
The Newton step (right panel) behaves differently in the two regions.
In Region~B the shared factor $\beta_{\mathrm{eff}}\alpha$ cancels between gradient and Hessian, so the step is exactly $\sigma,\Delta z = \hat{y}-y$, independent of $\alpha$ and $\beta_{\mathrm{eff}}$; i.e. the overshoot damping present in the gradient leaves the Newton step unchanged.
In Region~A the same $\sigma,\Delta z$ is augmented by a constant $\sigma(1-\beta_{\mathrm{eff}})/\alpha$ from the MAE term, which grows as $|z|$ increases and $\alpha$ decreases, so the undershoot side takes the largest Newton steps for large true values.

\subsection{Adaptive loss floor} \label{sec:czar:floor}

The base loss of a zero prediction approaches zero as $|z|\to 0$, so samples with true returns near zero contribute very little to the loss, allowing a zero-returns predictor to achieve a deceptively low mean log loss.
Therefore, to satisfy requirement~(v) of \S\ref{sec:background}, we add a floor that raises the minimum loss at small true returns.
The floor equals~$C$ at $z=0$ and decays to zero as $|z|$ grows.

Specifically, writing the zero-prediction base loss as
\begin{equation} \label{eq:L0}
\begin{split}
    \mathcal{L}_0(z) &= \mathcal{L}_{\mathrm{base}}(z,\,0) \\
                     &= (1-\beta_{\mathrm{eff}})\,|z| + \frac{\alpha}{2}\,z^2
\end{split}
\end{equation}
(a zero prediction never overshoots, so always falls in Region~A with $\Delta z=|z|$), the increase required to raise $\mathcal{L}_0(z)$ to at least $C$ is $\max\bigl(C - \mathcal{L}_0(z),\,0\bigr)$.
To avoid unintended behavior (see Appendix~\ref{app:optimization}), we implement a smoothed version of the rectification function.
Define the smooth hinge
\begin{equation} \label{eq:hinge}
    \mathrm{hinge}(x,\,\tau) = \tfrac{1}{2}\!\left(x + \sqrt{x^2 + \tau^2}\right),
\end{equation}
which approximates $\max(x,0)$ for small $\tau > 0$.
The loss floor then becomes
\begin{equation} \label{eq:lfloor}
    \mathcal{L}_{\mathrm{floor}}(z) = C\cdot\frac{\mathrm{hinge}(C - \mathcal{L}_0(z),\,\tau)}{\mathrm{hinge}(C,\,\tau)}.
\end{equation}
The denominator removes the upward bias of the hinge, so the floor equals exactly~$C$ at $z=0$ (where $\mathcal{L} _0=0$) and approaches zero smoothly as $\mathcal{L}_0$ grows beyond~$C$.
The total loss at $\hat{z}=0$ is thus pinned to $\approx\max(\mathcal{L}_0,\,C)$, non-decreasing in $|z|$ because $\mathcal{L}_0$ is strictly increasing.

Since $\mathcal{L}_{\mathrm{floor}}$ does not depend on $\hat{z}$, it contributes nothing to the gradient or Hessian and is invisible to gradient-based training.
The pressure away from the zero attractor in model training is thus carried entirely by the Region~A asymmetry of $\mathcal{L}_{\mathrm{base}}$ (for any $z\neq 0$, Equation~\eqref{eq:czar-grad} gives a gradient signed away from zero).
The role of the loss floor is instead in evaluation, ranking, and model selection.
It prevents a zero-returns predictor from achieving a misleadingly low mean log loss and, with the correlated default $C^*(\alpha)$, keeps the breakeven directional accuracy at or above~$50\%$ across noise scales (Appendix~\ref{app:optimization}).

\subsection{Total loss}\label{sec:czar:total}

\begin{table}[t]
    \centering
    \begin{tabular}{clcc}
    \hline
    Parameter & Role & Domain & Default \\
    \hline
    $\alpha$ & Quadratic regularization strength & $\alpha > 0$ & $1$ \\
    $\beta$  & Asymmetry decay rate              & $\beta \ge 0$  & $\beta^*(\alpha)$, eq.~\eqref{eq:correlated-beta}    \\
    $C$      & Floor target level                & $C \ge 0$        & $C^*(\alpha)$, eq.~\eqref{eq:correlated-C}    \\
    $\tau$   & Hinge smoothing width             & $\tau > 0$     & $0.5$    \\
    \hline
    \end{tabular}
    \caption{CZAR hyperparameters. At $\beta=0$, $\beta_{\mathrm{eff}}=1$ for all $z$ and both regions are purely quadratic (MSE-like). Larger $\beta$ increases the asymmetry between Region~A (MAE+quadratic) and Region~B (discounted quadratic) for returns with large $|z|$.}
    \label{tab:params}
    \end{table}
    
The total CZAR loss therefore becomes
\begin{equation}
    \mathcal{L}_{\mathrm{CZAR}}(y, \hat{y})
    = \mathcal{L}_{\mathrm{base}}(z, \hat{z}) + \mathcal{L}_{\mathrm{floor}}(z),
    \label{eq:czar-total}
\end{equation}
where the raw pair $(y, \hat{y})$ is converted to $(z, \hat{z})$ through Equation~\eqref{eq:standardize} and the properties of the target distribution ($\mu$, $\sigma$).
An example Python implementation for the loss function is described in Appendix~\ref{app:code}.

The loss function has four hyperparameters ($\alpha$, $\beta$, $C$, $\tau$), which are summarized in Table~\ref{tab:params}.
However, the parameters $\alpha$, $\beta$ and $C$ are not independent in practice.
For each value of $\alpha$, there is an optimal $\beta^*(\alpha)$ that minimizes the breakeven directional accuracy (as defined in \S\ref{sec:background}).
Sweeping both parameters over the breakeven DA at $\sigma_N = 1$ yields optimal pairs well described by a two-term power-law
\begin{equation}
    \beta^*(\alpha) = 4.2\,\alpha^{0.56} + 27.0\,\alpha^{2.17} ,
    \label{eq:correlated-beta}
\end{equation}
on the calibrated range $10^{-4} \leq \alpha \leq 0.7$ (at larger $\alpha$ the breakeven basin in $\beta$ becomes too shallow to fit, see Appendix~\ref{app:optimization}).
At the reference value $\alpha=1$, Equation~\eqref{eq:correlated-beta} gives $\beta^*\approx 31.2$. The implementation uses $\beta^*(\alpha)$ as the default when $\beta$ is not specified explicitly.
We show the full optimization results in Appendix~\ref{app:optimization}.

Similarly, an optimal floor level $C^*(\alpha)$ can be found by minimizing the log-loss\footnote{Recall that constant terms in loss functions do not affect the breakeven directional accuracy for linear losses, see \S\ref{sec:background}.} breakeven directional accuracy at fixed $\tau$, subject to the constraint that it remains $\ge 50$\%. The optimal values are well-described by an asymmetric Hill-bell model with a linear correction
\begin{equation}
    C^*(\alpha) = A\, h(\alpha)^k\, [1-h(\alpha)] + (R + m\alpha)\, h(\alpha),
    \qquad h(\alpha) = \frac{\alpha^p}{t^p + \alpha^p},
    \label{eq:correlated-C}
\end{equation}
over the range $10^{-5} \leq \alpha \leq 1$ with $(A, t, p, k, R, m) = (7.90,\, 4.59\times 10^{-3},\, 0.657,\, 0.684,\, 2.25,\, -0.218)$.
This form satisfies $C^*(0)=0$, peaks near $\alpha\sim 10^{-2}$, and decays slowly at larger $\alpha$.
The implementation uses $C^*(\alpha)$ as the default when $C$ is not specified explicitly; at $\alpha=1$ this gives $C^*\approx 2.19$.
The hinge width $\tau$ is relatively insensitive to $\alpha$ and $\beta$; we adopt $\tau=0.5$ as the default.
As with $\beta^*(\alpha)$, the fit in Equation~\eqref{eq:correlated-C} is intended for the working range $\alpha \lesssim 1$ (Appendix~\ref{app:optimization}).
Extrapolating well beyond that range, the linear correction causes $C^*(\alpha)$ to become negative near $\alpha \approx 10$, outside the domain $C \ge 0$ of Table~\ref{tab:params}.
The reference implementation (Appendix~\ref{app:code}) therefore disables the loss floor whenever $C \le 0$.

\begin{figure}[t]
    \centering
    \includegraphics[width=0.5\textwidth]{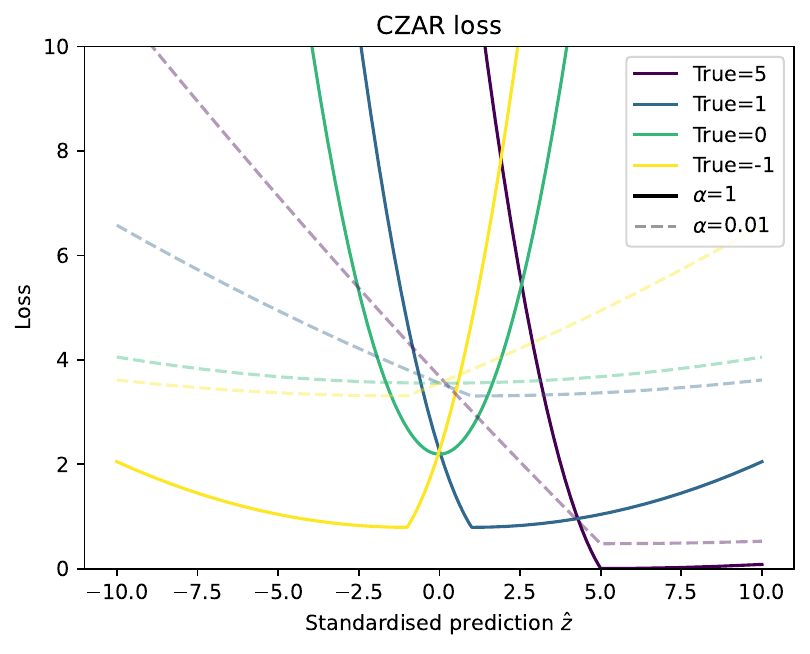}
    \caption{$\mathcal{L}_{\mathrm{CZAR}}$ as a function of standardized prediction~$\hat{z}$ for several standardized true values~$z$ (default parameters, $\mu=0$) and $\alpha = 1$ (solid lines) and $0.01$ (dashed lines). Each curve attains its minimum at $\hat{z}=z$. For large $|z|$, Region~A (undershooting or wrong direction) adopts a near-linear profile while Region~B (overshoot) remains quadratic, reflecting the asymmetry encoded by $\beta_{\mathrm{eff}}$. The loss floor raises the minimum loss for small $|z|$, visible as the elevated minimum of the $z=0$ curve.}
    \label{fig:czar-loss}
\end{figure}
    
Figure~\ref{fig:czar-loss} shows $\mathcal{L}_{\mathrm{CZAR}}$ as a function of standardized prediction~$\hat{z}$ for several true values, at two parameter sets each using the correlated default $\beta^*(\alpha)$.
Each curve is asymmetric about its minimum at $\hat{z}=z$.
For large $|z|$, Region~A (undershoot and wrong-direction predictions) is nearly linear, i.e. $\beta_{\mathrm{eff}}\to 0$ and the MAE-like term $(1-\beta_{\mathrm{eff}})\Delta z$ dominates.
Region~B (overshoot) is instead near-flat for large $|z|$.
The discounted quadratic $\beta_{\mathrm{eff}}\frac{\alpha}{2}\Delta z^2$ contributes negligible penalty once $\beta_{\mathrm{eff}}$ is small, so overshooting a large true return is almost free.
At $z=0$, the two regions collapse to a single MSE-like parabola; the loss floor then provides the bulk of the loss, pinning the zero-prediction loss to the target level~$C$.
Smaller $\alpha$ (dashed) scales down the overall loss without qualitatively changing the shape.

\begin{figure}[t]
    \centering
    \includegraphics[width=0.9\textwidth]{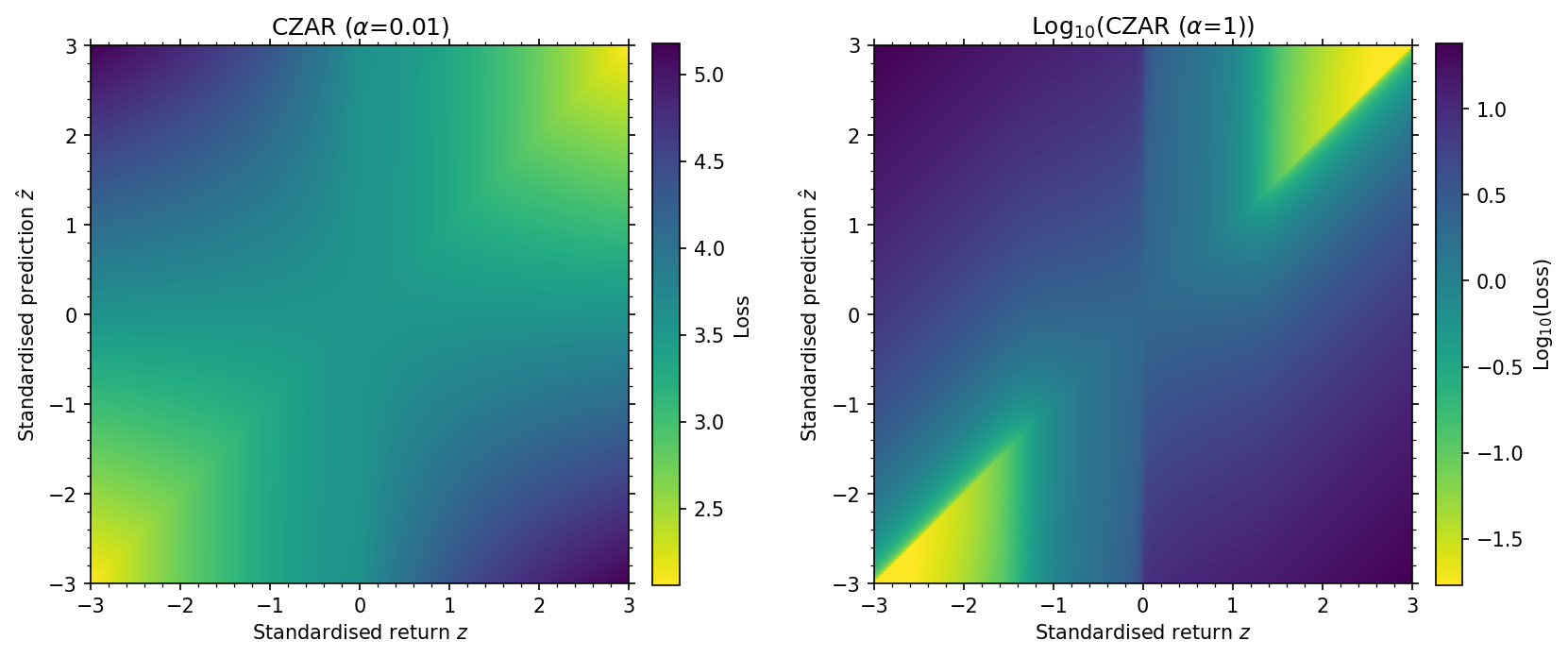}
    \includegraphics[width=0.9\textwidth]{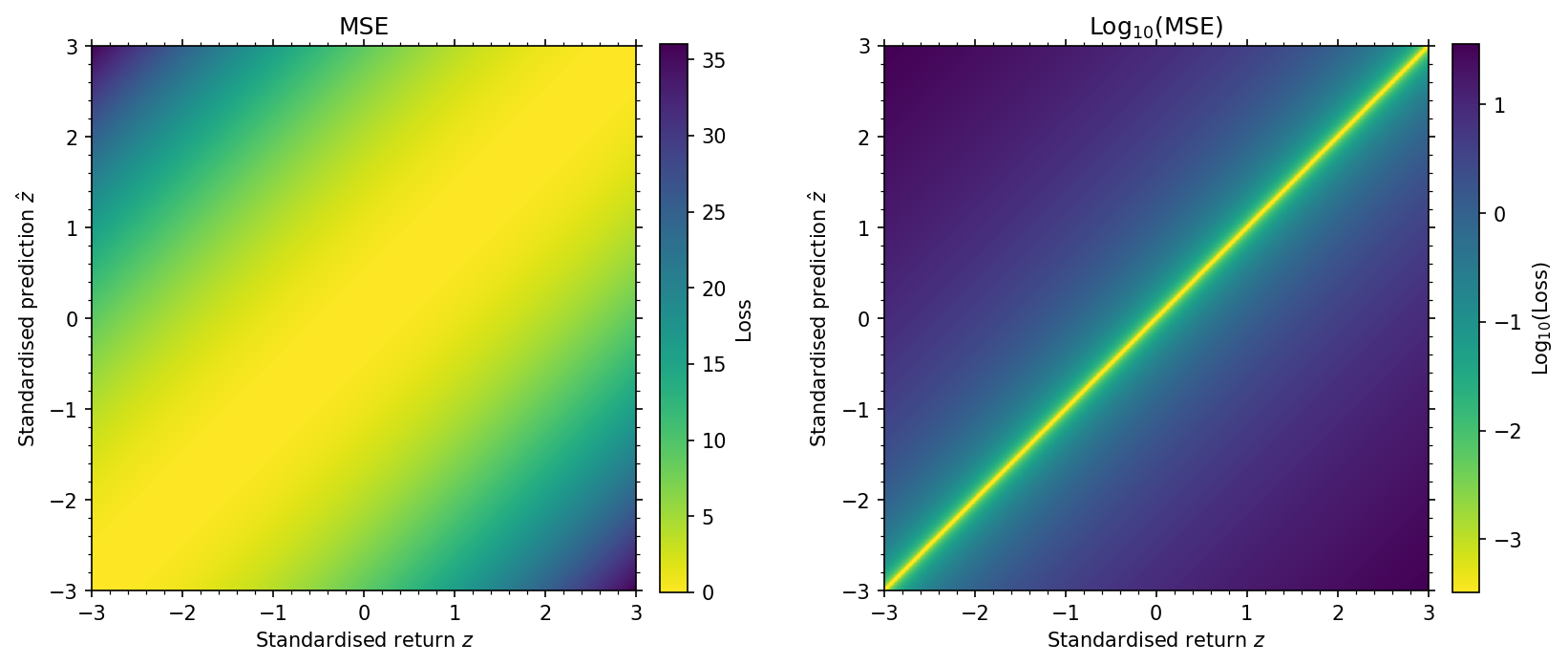}
    \caption{Loss in the $(z,\hat{z})$ plane for CZAR (top) and MSE (bottom). Left panels show loss on a linear scale, as seen by a gradient-based optimizer; right panels show $\log_{10}(\mathrm{loss})$, more representative of the dynamic range used when ranking models. The CZAR linear panel uses $\alpha=0.01$ (where the MAE-like undershoot regime is most visible on a linear scale) while the CZAR log panel uses $\alpha=1$ (where the dynamic range is best resolved on a log scale); both use the correlated defaults $\beta^*(\alpha)$, $C^*(\alpha)$, and $\tau=0.5$.}
    \label{fig:color-maps}
\end{figure}
    
To see the full shape of the loss surface rather than slices through it, Figure~\ref{fig:color-maps} plots $\mathcal{L}_{\mathrm{CZAR}}$ over the $(y,\hat{y})$ plane alongside MSE as a reference symmetric baseline.
MSE is symmetric about the diagonal $\hat{y}=y$ and depends only on the residual magnitude $(\hat{y}-y)^2$.
CZAR breaks this symmetry along the orthogonal direction.
Predictions that share the sign of the true return but overshoot in magnitude (off-diagonal points in the upper-right and lower-left quadrants) incur only a small additional penalty above the diagonal, whereas undershoots and wrong-direction predictions (off-diagonal points in the opposite-sign quadrants) are penalized much more heavily.
The loss floor seen as the elevated minimum of the $z=0$ curve in Figure~\ref{fig:czar-loss} appears here as a bright horizontal band of elevated loss along $y=0$ in the CZAR maps.
This raises the sample-averaged (ranking) loss of a near-zero predictor, since predictions in this strip cannot bring their loss arbitrarily close to zero; it does not act on per-sample gradients, which depend only on $\mathcal{L}_{\mathrm{base}}$ (\S\ref{sec:czar:floor}).
The corresponding training-time suppression of $\hat{z}\approx 0$ is therefore carried by the Region~A asymmetry of $\mathcal{L}_{\mathrm{base}}$, whose gradient is signed away from zero whenever $z\neq 0$.

%%%%%%%%%%%%%%%%%%%%%%
\section{Evaluation}
\label{sec:evaluation}

\subsection{Idealized tests}
\label{sec:eval:idealized}

\begin{figure}[t]
    \centering
    \includegraphics[width=0.6\textwidth]{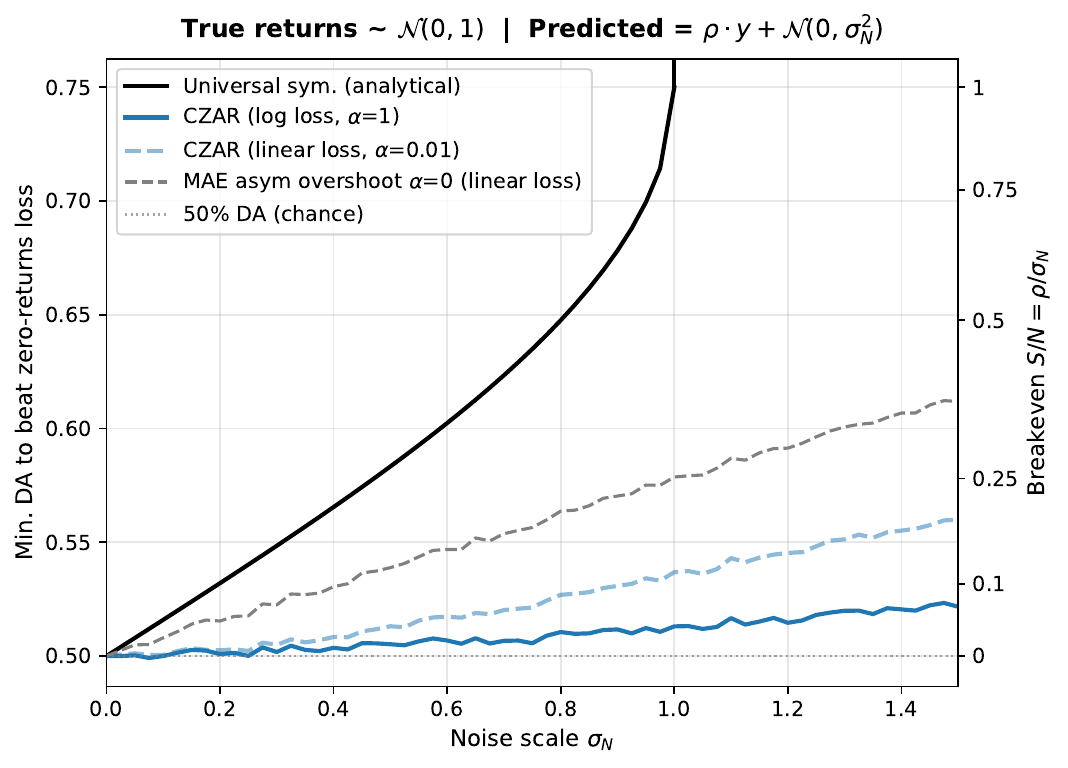}
    \caption{Breakeven directional accuracy vs.\ noise scale $\sigma_N$ under the Gaussian linear prediction model. The left axis shows the minimum DA required to beat the zero-returns predictor; the right axis shows the equivalent breakeven signal-to-noise $\rho/\sigma_N$. The black solid line shows universal analytical breakeven for symmetric monotonic losses, $\mathrm{DA}=0.5+\arctan(\rho_\mathrm{min}/\sigma_N)/\pi$ with $\rho_\mathrm{min}=1-\sqrt{1-\sigma_N^2}$; the gray dashed line shows the lower envelope achievable with the asymmetric MAE loss ($\alpha=0$) under mean linear loss (reproduced from Figure~\ref{fig:breakeven_DA}); the light blue dashed line shows CZAR under mean linear loss ($\alpha=0.01$); and the dark blue solid shows CZAR under mean log loss ($\alpha=1$). Both CZAR curves use the correlated defaults $\beta^*(\alpha)$, $C^*(\alpha)$, and $\tau=0.5$.}
    \label{fig:czar_breakeven}
\end{figure}
    
We revisit the Gaussian linear testbed of \S\ref{sec:background}, $y\sim\mathcal{N}(0,1)$ with $\hat{y} = \rho\,y + \sigma_N\,\varepsilon$, and ask two questions of CZAR: how does the breakeven directional accuracy compare to the symmetric and constant-asymmetry envelopes derived in \S\ref{sec:background}; and does the ordering induced by CZAR over a broad $(\rho,\sigma_N)$ sweep correctly track the prediction-quality metrics that matter downstream?
Throughout this subsection, CZAR uses the correlated defaults $\beta^*(\alpha)$ and $C^*(\alpha)$ from Equations~\eqref{eq:correlated-beta} and~\eqref{eq:correlated-C}, with $\tau=0.5$.
Loss expectations are estimated by Monte Carlo over $5\times 10^4$ samples per $(\rho,\sigma_N)$ pair.

Figure~\ref{fig:czar_breakeven} repeats the breakeven analysis of \S\ref{sec:background} for CZAR.
Under mean log loss, CZAR tracks the $50\%$ chance line to within $\lesssim 5$\,pp across the full $\sigma_N\in[0,1.5]$ range, a substantial improvement over both the universal symmetric envelope (which rises sharply as $\sigma_N\to 1$) and the lower bound achievable with a purely asymmetric MAE loss ($\alpha=0$ overshoot reward).
Under mean linear loss, CZAR's breakeven is intermediate but still improves on the asymmetric-MAE envelope at every noise scale.
In the Gaussian linear testbed, the combination of asymmetry and the adaptive loss floor substantially reduces the zero-returns breakeven directional-accuracy threshold, narrowing the regime in which a model with genuine but noisy directional skill ranks below the zero-returns predictor, a regime that persists even for the best-tuned symmetric or constant asymmetry losses in Figure~\ref{fig:breakeven_DA}.

\begin{figure}[t]
    \centering
    \includegraphics[width=\textwidth]{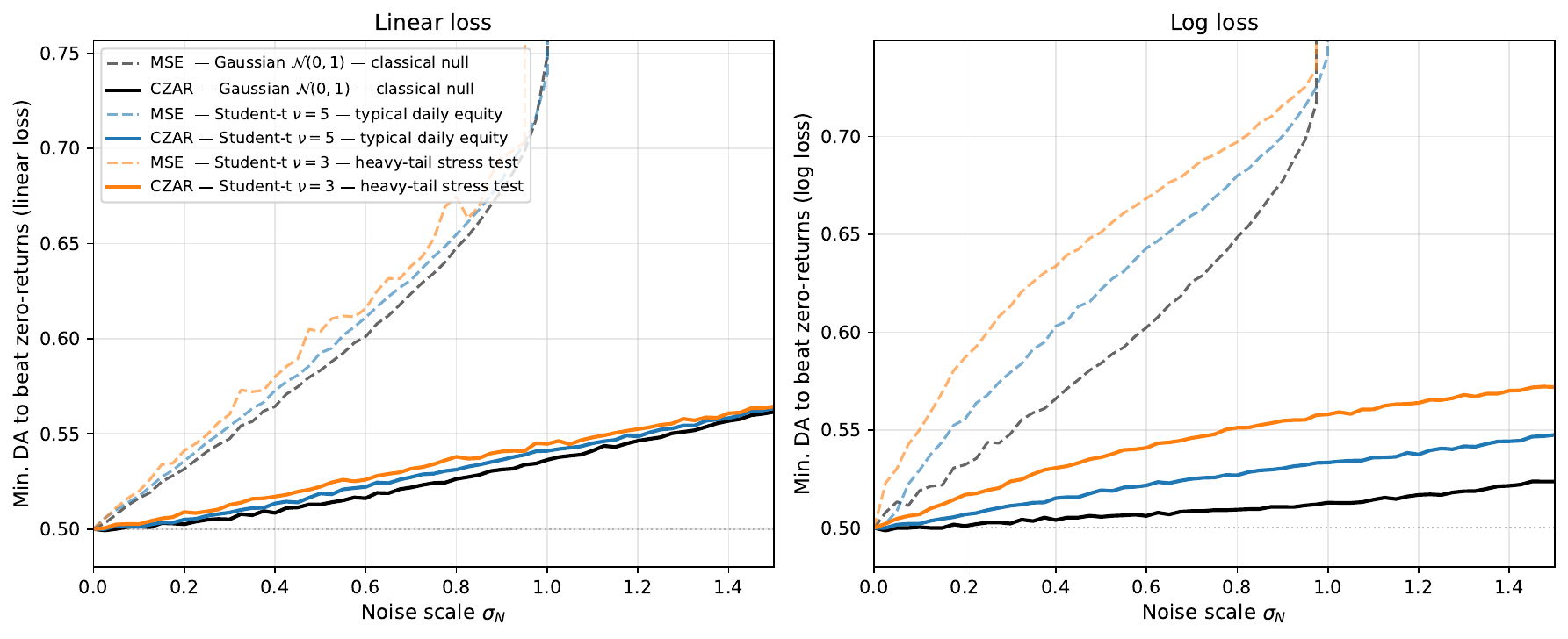}
    \caption{Breakeven directional accuracy vs.\ noise scale $\sigma_N$ under Gaussian (black lines) and heavy-tailed Student-$t$ truth distributions (blue lines for $\nu=5$ and orange lines for $\nu=3$), all normalized to unit variance. The left panel shows mean linear loss. The right panel shows mean log loss. In both panels, solid lines show the CZAR loss with the correlated defaults $\beta^*(\alpha)$, $C^*(\alpha)$, $\tau=0.5$ ($\alpha=0.01$ for linear loss, $\alpha=1$ for log loss), and dashed lines show the MSE loss. Under linear loss the breakeven curves are close to distribution-invariant. Under log loss heavier tails raise the breakeven DA for both losses, but CZAR retains its margin over MSE at every $\sigma_N$ and tail index.}
    \label{fig:czar_breakeven_heavy_tail}
\end{figure}
    
The Gaussian linear testbed is a deliberately friendly baseline; real return distributions are heavier-tailed, with empirically documented tail indices in the range $\nu\sim 3$--$5$ \citep{blattberg74,cont01,gopikrishnan99}.
Figure~\ref{fig:czar_breakeven_heavy_tail} repeats the breakeven sweep with $y$ and the noise term $\varepsilon$ for $\hat{y}$ drawn from unit-variance Student-$t$ distributions at $\nu=5$ (typical daily equity) and $\nu=3$ (heavy-tail stress test, for which the kurtosis is infinite).
Under linear loss, both CZAR and MSE breakeven curves are empirically close across these unit-variance distributions, and the CZAR advantage over MSE is preserved across all three.
Under log loss, heavier tails raise the breakeven DA for both losses, but CZAR retains a substantial margin over MSE at every $\sigma_N$ and at every tail index considered.
The mild degradation of CZAR's log-loss breakeven under heavier tails is at least partly an artifact of our defaults.
$C^*(\alpha)$ and $\tau$ were calibrated against the Gaussian linear testbed, so re-optimizing these parameters against a heavy-tailed reference distribution should recover much of the Gaussian-case performance for problems where heavy tails are expected.
The combination of asymmetry and adaptive loss floor therefore does not rely on Gaussian tails to deliver its advantage.

\begin{figure}[pt]
    \centering
    \includegraphics[width=0.95\textwidth]{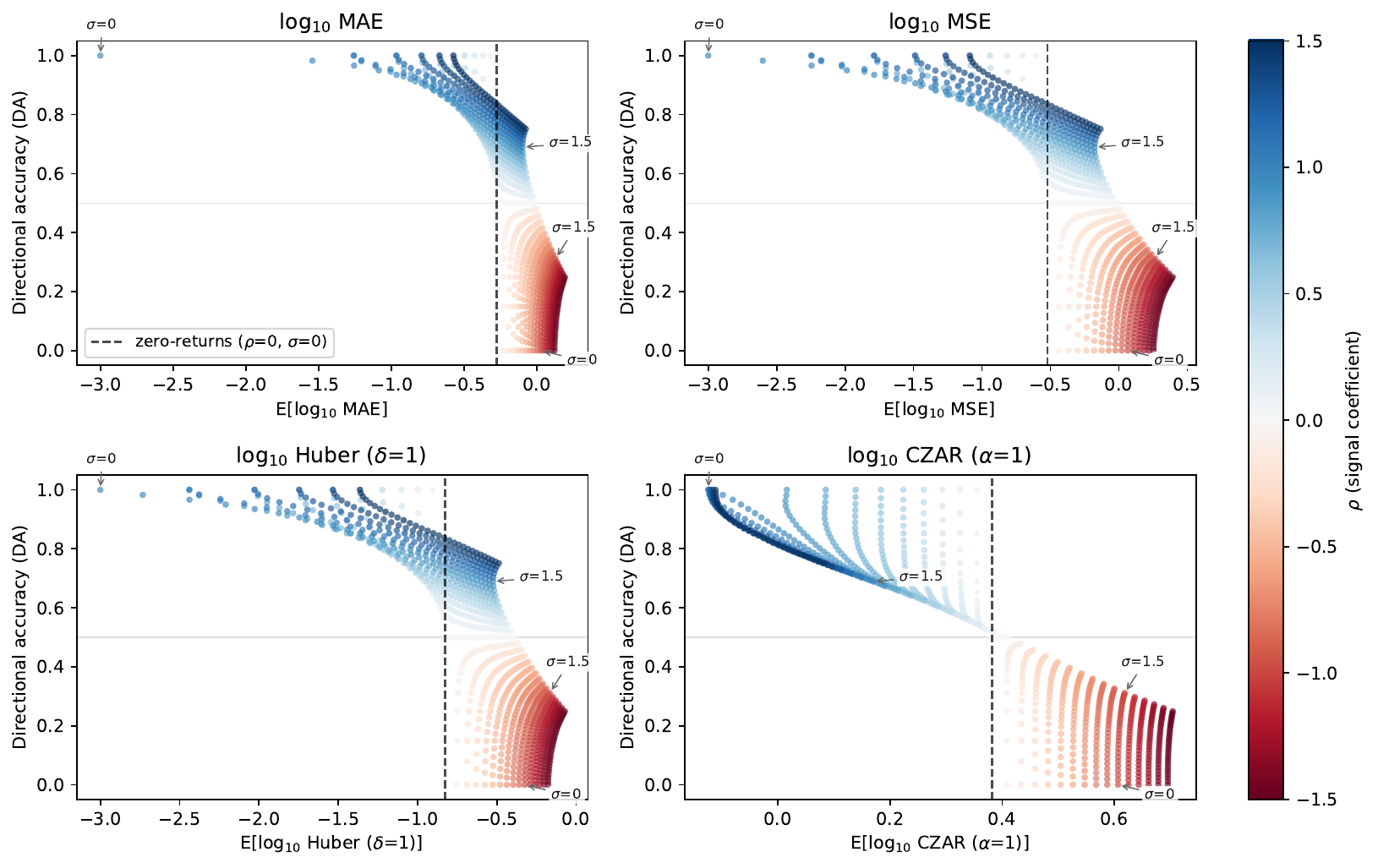}
    \includegraphics[width=0.95\textwidth]{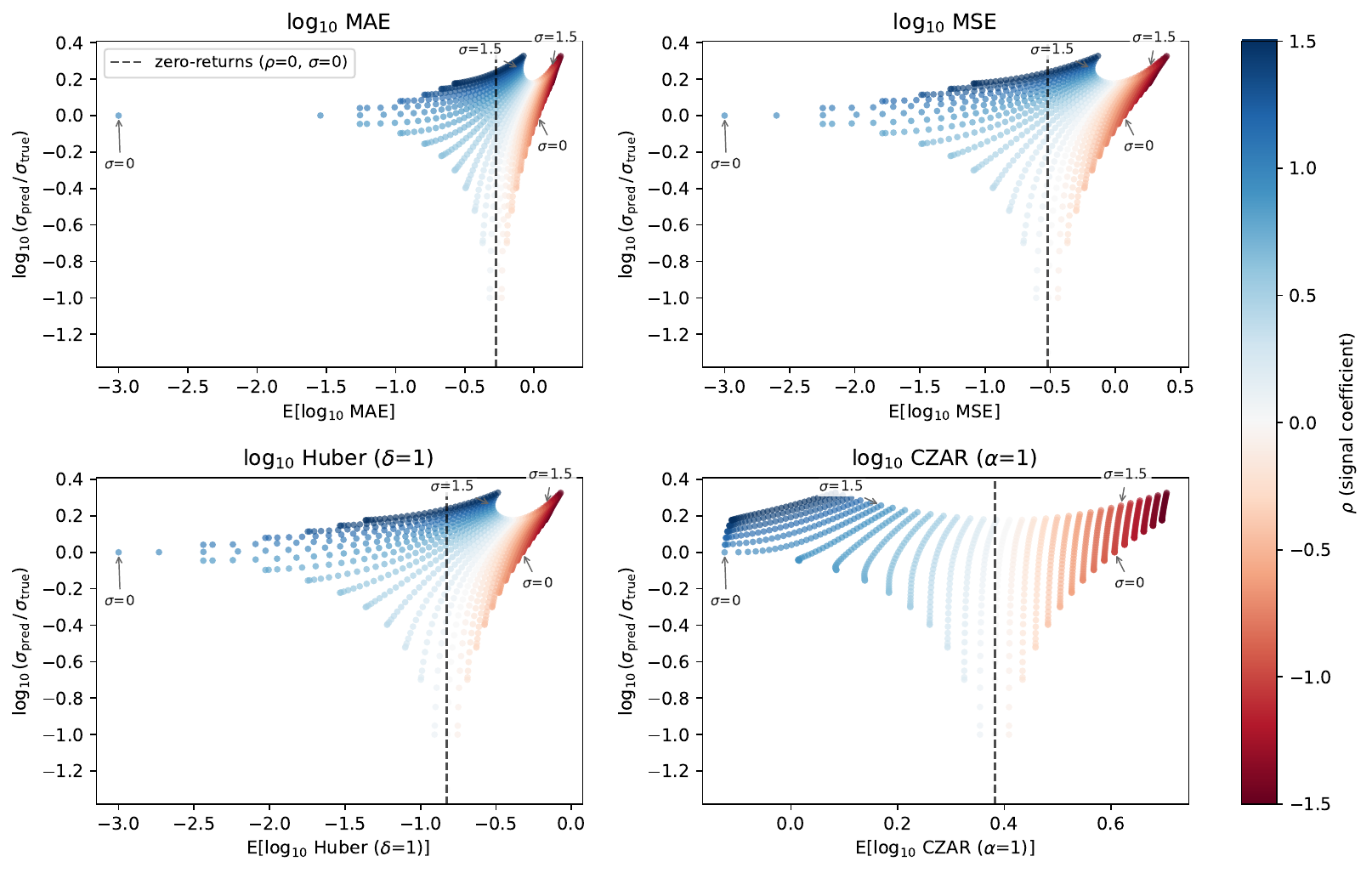}
    \caption{Sample-averaged $\log_{10}$-loss vs.\ prediction-quality metric for the Gaussian linear model, swept over $\rho\in[-1.5,1.5]$ and $\sigma_N\in[0,1.5]$. Each point is one $(\rho,\sigma_N)$ pair, with color encoding $\rho$ (blue for positive, red for negative); a small floor $\epsilon=10^{-3}$ is added before taking the log to avoid divergence at zero residuals. The floor is orders of magnitude below the zero-returns baseline of every loss shown and matters only for near-perfect predictors ($\rho\approx 1$, $\sigma_N\approx 0$), so the position of the dashed line and the orderings discussed in the text are unaffected by its value. The dashed vertical line in each panel marks the log-loss of the zero-returns predictor ($\rho=0$, $\sigma_N=0$); points to its left beat the zero-returns baseline. Columns compare MAE, MSE, Huber ($\delta=1$) and CZAR ($\alpha=1$ with the correlated defaults $\beta^*(\alpha)$, $C^*(\alpha)$, and $\tau=0.5$). \emph{Top:} directional accuracy (DA) on the vertical axis. \emph{Bottom:} log aspect ratio $\log_{10}(\sigma_\mathrm{pred}/\sigma_\mathrm{true})$ on the vertical axis.}
    \label{fig:metric-scatter}
\end{figure}
    
The breakeven analysis collapses each $(\rho,\sigma_N)$ pair to a single scalar (the minimum DA needed to beat zero returns).
Figure~\ref{fig:metric-scatter} unfolds this view by comparing the mean log-loss against two prediction-quality metrics (directional accuracy, top, and log aspect ratio $\log_{10}(\sigma_\mathrm{pred}/\sigma_\mathrm{true})$, bottom) across a wide sweep $\rho\in[-1.5,1.5]$, $\sigma_N\in[0,1.5]$.
The dashed vertical line in each panel is the log-loss of the zero-returns predictor, so models to its left rank above the zero-returns baseline.
Ideally, all models with positive signal should have lower losses than the zero-returns predictor.
Two failure modes of the symmetric losses (MAE, MSE, Huber) are visible.

\textit{Directional-accuracy failure mode (top panels).}
For each symmetric loss, the high-$|\rho|$, high-$\sigma_N$ branch (models with DA $\approx 0.6$--$0.7$) sits entirely to the right of the zero-returns line.
Any ranking based on these losses therefore prefers the trivial constant predictor to a noisy model with real directional skill.
Under CZAR, the asymmetric overshoot discount lowers the loss of directionally correct ($\rho>0$) noisy predictions, moving this branch leftward across the zero-returns line, while the negative-signal ($\rho<0$) branch remains in the penalized region; informative noisy models are then correctly preferred.
Beyond this shift in position, the $\rho>0$ branch also collapses onto a single DA--log-loss track.
Different $(\rho,\sigma_N)$ pairs with the same DA land at essentially the same CZAR loss, so ranking by mean CZAR loss tracks ranking by DA directly.
Symmetric losses fail this alignment because they spread the same DA values across a wide log-loss band, reflecting residual magnitude rather than directional correctness.

\textit{Aspect-ratio degeneracy (bottom panels).}
At fixed $\rho$, increasing $\sigma_N$ moves $\hat{y}=\rho y+\sigma_N\varepsilon$ to higher $\log_{10}\mathrm{AR}$ as the noise contribution to $\sigma_{\hat y}$ grows.
Under the symmetric losses, this sweep also raises the log-loss.
A fixed-$\rho$ sequence thus traces a diagonal in the panel, so noise is directly punished at fixed signal.
A consequence is that a model with no directional skill can reduce its loss simply by shrinking the magnitude of its predictions, since this reduces the average residual magnitude; the well-known regression-to-the-mean attractor.
Under CZAR, the same fixed-$\rho$ sequence runs nearly vertically (at least for $\rho\lesssim 0.5$).
The mean log-loss is essentially independent of $\sigma_N$ and is set by the signal level alone.
Ranking by mean CZAR loss therefore tracks the directional signal rather than the prediction's overall variance, decoupling rank from prediction noise and magnitude.

\subsection{Model training}
\label{sec:eval:training}

\begin{table}[t]
    \centering
    \small
    \begin{tabular}{lrrrrrrr}
    \hline
    Training loss & DA & DA$_{\mathrm{IQR}}$ & DA$_{1\sigma}$ & $\log_{10}$AR & $R_P$ & IC & Sharpe$_\mathrm{naive}$ \\
    \hline
    \multicolumn{8}{l}{\textit{BTC 15-minute}}\\
    \hline
    L1                    & 0.510 & 0.487 & 0.473 & $-0.93$   & 0.019  & 0.018  & $-4.97$ \\
    L2                    & 0.514 & 0.508 & 0.499 & $-1.18$   & 0.016  & 0.019  & $0.96$  \\
    CZAR ($\alpha=0.005$) & 0.531 & 0.523 & 0.537 & $-0.46$   & 0.046  & 0.034  & $5.62$  \\
    CZAR ($\alpha=0.01$)  & 0.522 & 0.504 & 0.501 & $-0.44$   & 0.017  & 0.033  & $-0.95$ \\
    CZAR ($\alpha=0.05$)  & 0.523 & 0.525 & 0.534 & $-0.59$   & 0.033  & 0.050  & $3.95$  \\
    CZAR ($\alpha=0.1$)   & 0.513 & 0.514 & 0.511 & $-0.58$   & 0.025  & 0.019  & $1.15$  \\
    CZAR ($\alpha=0.5$)   & 0.521 & 0.523 & 0.532 & $-0.71$   & 0.033  & 0.029  & $4.76$  \\
    CZAR ($\alpha=1$)     & 0.514 & 0.514 & 0.527 & $-0.82$   & 0.055  & 0.032  & $5.08$  \\
    CZAR (tuned, $\alpha\approx 0.054$) & 0.524 & 0.519 & 0.561 & $-0.60$ & 0.039 & 0.056 & $7.09$ \\
    \hline
    \multicolumn{8}{l}{\textit{BTC 1-hour}}\\
    \hline
    L1                    & 0.518 & 0.505 & 0.473 & $-0.96$   & 0.019  & 0.053  & $-1.09$ \\
    L2                    & 0.517 & 0.513 & 0.485 & $-1.02$   & 0.001  & 0.021  & $1.29$  \\
    CZAR ($\alpha=0.005$) & 0.521 & 0.526 & 0.538 & $-0.41$   & 0.038  & 0.046  & $2.67$  \\
    CZAR ($\alpha=0.01$)  & 0.526 & 0.528 & 0.524 & $-0.54$   & 0.034  & 0.065  & $2.38$  \\
    CZAR ($\alpha=0.05$)  & 0.516 & 0.522 & 0.531 & $-0.49$   & 0.034  & 0.058  & $2.86$  \\
    CZAR ($\alpha=0.1$)   & 0.526 & 0.546 & 0.550 & $-0.55$   & 0.047  & 0.072  & $5.96$  \\
    CZAR ($\alpha=0.5$)   & 0.515 & 0.521 & 0.520 & $-0.71$   & 0.044  & 0.059  & $3.89$  \\
    CZAR ($\alpha=1$)     & 0.512 & 0.523 & 0.529 & $-0.62$   & 0.041  & 0.044  & $3.21$  \\
    CZAR (tuned, $\alpha\approx 0.0045$) & 0.519 & 0.525 & 0.520 & $-0.45$ & 0.031 & 0.052 & $3.00$ \\
    \hline
    \end{tabular}
    \caption{Held-out metrics for LightGBM models trained with each loss on BTC log-returns, at the 15-minute (top) and 1-hour (bottom) horizons. DA$_{1\sigma}$ and DA$_{\mathrm{IQR}}$ denote directional accuracy restricted to truths with $|z|>1\sigma$ and to truths outside the interquartile range, respectively. The symmetric losses L1 and L2 shrink their predictions well below the truth ($\log_{10}\mathrm{AR}\approx-0.9$ to $-1.2$, i.e.\ roughly an order of magnitude too small) and their directional accuracy on large moves sits at or below the $50\%$ chance line. Every CZAR variant keeps $\log_{10}\mathrm{AR}$ closer to zero, and often raise DA on large moves above chance, lift $R_P$ and IC several-fold over the symmetric baselines, and generally improve the naive Sharpe. Full-sample DA differences are smaller and often within sampling error. Binomial standard errors on the $2\,000$-candle test window are $\approx 1.1$ percentage points on the full-sample DA, and $\approx 2$ and $\approx 1.6$ points on DA$_{1\sigma}$ and DA$_{\mathrm{IQR}}$, which use only the roughly one-third and one-half of truths in the tails. The naive Sharpe ratios exclude costs (i.e. are not comparable to live trading) and are annualized from test windows of $\approx 21$ days (15-minute) and $\approx 83$ days (1-hour).}
    \label{tab:model-training}
\end{table}
    
In this section, we turn from idealized Gaussian predictions to a gradient-boosted model trained on real data.
We test a LightGBM regressor predicting BTC log-returns at two intraday horizons (15-minute and 1-hour candles), with CZAR as the training objective through its analytical gradient and Hessian (\S\ref{sec:czar:derivatives}).
This experiment is intended as a controlled illustration of how the choice of training loss affects what a model learns, not as a search for the best attainable forecaster; the absolute numbers matter far less than the comparison across losses.
The feature set is intentionally restricted to a basic, easy-to-reproduce set, so the question is which loss extracts the most usable signal from the simplified set, rather than whether the features are competitive in absolute terms.
The learning rate is also calibrated per loss and then held fixed (as described below), factoring out the strong dependence of the effective step size on the loss function.

For the CZAR loss we sweep $\alpha$ from $0.005$ to $1$ and additionally report a CZAR-tuned variant (where $\alpha$ is treated as an Optuna parameter, but using fixed $\alpha=1$ for evaluation to keep the loss scale constant).
The CZAR loss variants are compared to two symmetric baseline loss functions (L1, L2).
Table~\ref{tab:model-training} summarizes the held-out test set at each horizon across the prediction-quality metrics introduced in \S\ref{sec:evaluation}: directional accuracy~(DA) over the full sample and restricted to large-magnitude truths (those with $|z|>1\sigma$, and separately those outside the interquartile range), log aspect ratio $\log_{10}(\sigma_{\hat y}/\sigma_y)$, Pearson correlation $R_P$, rank correlation (IC), and a naive (annualized) Sharpe ratio\footnote{Each period the strategy takes a unit long position when $\hat y>0$ and a unit short position when $\hat y<0$, earning the realized return $\operatorname{sign}(\hat y)\,y$. The reported figure is the ratio of the mean of $\operatorname{sign}(\hat y)\,y$ to its standard deviation, annualized by $\sqrt{T}$ (the annualization factor is given below). We call it \emph{naive} because it uses only the sign of the prediction (discarding magnitude and confidence), stays fully invested and rebalances every candle, and excludes transaction costs, slippage, and position sizing.}.

Inputs are one-minute BTC/USD OHLCV candles sourced from Tiingo\footnote{\url{https://www.tiingo.com/}} (retrieved on 2026-04-30), aggregated to the prediction horizon.
The target is the next aggregated-candle log return $\log(c_{t+1}/c_t)$.
For the CZAR objective and evaluation, returns are standardized by a trailing volatility $\sigma$, computed using a $100$-candle rolling standard deviation of single-candle realized returns (minimum $50$ candles).
We use a single most-recent split at each horizon of $20\,000$ candles for training (with the last $2\,000$ held out as Optuna inner validation), a one-candle gap, then a $2\,000$-candle test window.
Features are computed exclusively from past candles: single-candle log returns and within-candle realized volatilities at the six most recent lags; candle-geometry features (the log high--low range, the open-to-close return, the upper and lower wick lengths, and the position of the close within the candle range); the log gap between the close and the OHLC4 typical price together with its $5$- and $15$-candle rolling means; volume features (log volume and two log volume-change ratios over horizon-relative windows); and $\sin$/$\cos$ encodings of time-of-day and day-of-week (from UTC timestamps).

Each entry in Table~\ref{tab:model-training} is independently tuned with Optuna (TPE sampler, $300$ trials per model), minimizing the sample mean of the logarithm of the per-sample values of the model's own training loss, $\langle\log|\mathcal{L}|\rangle$.
The search is joint over the LightGBM hyperparameters \texttt{num\_leaves}~$\in[15,127]$, \texttt{min\_child\_samples}~$\in[20,300]$, \texttt{subsample} and \texttt{colsample\_bytree}~$\in[0.6,1]$, and the $L^1$ and $L^2$ leaf regularization strengths~$\in[10^{-8},10]$ (log-scaled), with a fixed random seed ($42$) so that each per-row Optuna study is deterministic.
The training window is fixed at $20\,000$ candles.
Rather than tuning the number of trees, each fit early-stops after $100$ rounds with no inner-validation gain (capped at $5\,000$, though in practice it is typically $<200$), and the stopped iteration count sets the final tree count.
The learning rate requires care because different losses produce Newton steps of very different magnitude.
A fixed rate would place L2 and small-$\alpha$ CZAR at very different points on the convergence curve.
We calibrate a per-loss scale $s_\mathrm{loss}$ by fitting a one-tree LightGBM with \texttt{learning\_rate}~$=1$ for each loss, measuring the standard deviation of its first-step predictions, and dividing by the L2 reference; the learning rate is then fixed at $0.05/s_\mathrm{loss}$, anchored to a base L2 rate of $0.05$.
For the CZAR-tuned row, $\alpha\in[10^{-3},1]$ is sampled on a log grid as an additional Optuna parameter and the scale is interpolated between calibrated grid points.
Custom losses are passed to LightGBM through its \texttt{objective} hook (using the closed-form gradient and Hessian from \S\ref{sec:czar:derivatives}) and the Hessian is clipped from below at $10^{-6}$ to guard against degenerate Newton steps when $\beta_\mathrm{eff}$ is very small.
The naive Sharpe ratios are annualized by $\sqrt{525\,960/\ell}$, where $\ell$ is the prediction horizon in minutes and $525\,960$ is the number of one-minute intervals per year, inferred from the median candle interval; this gives factors of $\approx 187$ at the 15-minute horizon and $\approx 94$ at the 1-hour horizon. The $\sqrt{T}$ scaling assumes serially uncorrelated returns and should be read as a reporting convention rather than an estimate of attainable annual performance \citep{lo02}.

The qualitative picture matches the idealized analysis of \S\ref{sec:eval:idealized}.
The two symmetric losses exhibit the aspect-ratio degeneracy.
At both horizons their predictions are shrunk roughly an order of magnitude below the truth ($\log_{10}\mathrm{AR}\approx-0.9$ to $-1.2$), their full-sample DA hovers near the chance line, and their directional accuracy on large-magnitude moves falls \emph{at or below} $50\%$ (DA$_{1\sigma}\approx 0.47$--$0.50$).
In contrast, CZAR keeps the aspect ratio within a factor of a few of unity across the whole $\alpha$ sweep and lifts DA$_{1\sigma}$ above chance for every setting ($\approx 0.50$--$0.56$), while raising $R_P$ and IC and generally improving the naive Sharpe.

However, the improvement for CZAR is not uniform across every metric.
Given a binomial standard error of $\approx 1.1$ percentage points (pp) on the $2\,000$-candle test window, full-sample DA is generally statistically indistinguishable from L1 and L2.
Instead, the improvement in CZAR tends to occur for larger moves (DA$_{1\sigma}$ and DA$_{\mathrm{IQR}}$), which also help to drive the several-fold gain in $R_P$ and higher naive Sharpe.
The conditional accuracies DA$_{1\sigma}$ and DA$_{\mathrm{IQR}}$ are computed on tail subsets (roughly a third and a half of the sample) and so carry larger standard errors ($\approx 2$ and $\approx 1.6$\,pp), but the sign of the gap (symmetric losses at or below chance, CZAR above) is consistent across the two horizons and both conditioning rules.

Every fixed CZAR setting clears both symmetric losses (L1 and L2) on the naive Sharpe at the 1-hour horizon, and all but one ($\alpha=0.01$) do so at the 15-minute horizon.
The tuned variant lands at a small $\alpha$ at both horizons ($\approx 0.054$ and $\approx 0.0045$) and delivers the best 15-minute DA$_{1\sigma}$ ($0.561$) while keeping the aspect ratio well-behaved.
The log aspect ratio itself varies systematically with $\alpha$.
It grows as $\alpha$ decreases, from $\approx-0.82$ at $\alpha=1$ to $\approx-0.46$ at $\alpha=0.005$ at the 15-minute horizon (and the same trend, more noisily, at the 1-hour horizon), because a smaller $\alpha$ produces larger Newton steps in the undershoot/wrong-direction region and so pushes predictions further from zero (Figure~\ref{fig:czar-grad-panel}).

A final check isolates where the improvement comes from.
The metrics in Table~\ref{tab:model-training} tune and early-stop each model on its own log-loss (the CZAR log-loss for the CZAR rows, the symmetric log-loss for L1 and L2).
Instead, we repeat the 15-minute experiment selecting \emph{every} model, including the CZAR-trained ones, on the symmetric log-L1 validation loss instead; the boosting gradients are unchanged, so only the model-selection and early-stopping criterion differs.
In this test, the CZAR models are pulled back onto the aspect-ratio degeneracy.
Their log aspect ratios collapse from the $-0.4$ to $-0.8$ range into the $-1.0$ to $-1.4$ range of the symmetric losses (predicted spreads shrink from roughly a quarter to a tenth of the true scale), and DA$_{1\sigma}$ and DA$_{\mathrm{IQR}}$ are reduced to $\approx 50$\%.
Training with CZAR is thus necessary but not sufficient (a symmetric selection criterion silently reintroduces the shrinkage attractor even when the gradients are CZAR's), so the gains reported above rely on using CZAR as both the training objective and the model-selection loss.

%%%%%%%%%%%%%%%%%%%%%%%%%%%%%%%%%%%%
\section{Discussion and conclusions}
\label{sec:discussion}

We have introduced CZAR, a composite zero-agnostic loss function for log-return prediction, motivated by a simple but consequential failure of standard regression losses: because the conditional mean of returns sits close to zero, symmetric losses make the constant zero forecast a near-optimal solution, an attractor during training and a misleading benchmark during evaluation.
From an analysis of Gaussian linear prediction models we derived five requirements for an adequate returns loss (\S\ref{sec:background}) and constructed a piecewise quadratic loss satisfying all of them (\S\ref{sec:czar}).
CZAR is provably convex in the prediction at fixed true value, has a closed-form gradient and Hessian suitable for gradient-boosted libraries, and its four hyperparameters reduce in practice to a single choice of $\alpha$ through the correlated defaults ($\beta^*$ and $C^*$) and the fixed hinge width ($\tau$).
In idealized tests its log-loss breakeven directional accuracy tracks the $50\%$ chance line where all symmetric monotonic losses share a sharply-increasing universal envelope, an advantage that survives heavy-tailed truth distributions.
The correlated defaults $\beta^*(\alpha)$ and $C^*(\alpha)$ were calibrated against the Gaussian linear testbed; under heavy-tailed truths the log-loss breakeven degrades mildly, and recalibrating against the target distribution should recover much of the gap (\S\ref{sec:eval:idealized}).

In a LightGBM experiment on intraday (15-minute and 1-hour) BTC log-returns, CZAR-trained models reduce the prediction shrinkage exhibited by the L1 and L2 baselines and improve the naive (cost-excluded) Sharpe ratio and the directional accuracy on large-magnitude returns (\S\ref{sec:evaluation}).
The loss floor is invisible to per-sample gradients, so the training-time pressure away from the zero attractor rests entirely on the base asymmetry; the loss floor acts only through sample-averaged validation and ranking losses.

Beyond its role as a training objective, the sample-averaged CZAR log-loss is a candidate criterion for ranking or scoring forecasters.
Two properties from \S\ref{sec:eval:idealized} support this.
First, across the $(\rho,\sigma_N)$ sweep the positive-signal models collapse onto a single DA--loss track.
Models with the same directional accuracy receive essentially the same CZAR loss regardless of how that accuracy is composed from signal and noise, so ranking by mean CZAR loss approximates ranking by directional skill rather than by residual magnitude.
Second, the loss floor pins the loss of near-zero predictors to at least $C$, so trivial forecasters cannot lead a ranking, while wrong-direction forecasters remain penalized through the Region~A asymmetry.
These properties matter wherever sample-averaged losses carry decision weight, such as in conventional pipelines, for hyperparameter tuning, early stopping, and model selection (where selecting on a symmetric validation loss can silently reintroduce the zero-returns attractor even when training uses CZAR), and in decentralized inference networks such as Allora \citep{kruijssen24}, where topic-level losses determine the relative rewards of competing workers and a symmetric criterion would actively subsidize zero-like inference strategies.

The evaluation results also carry a methodological warning that is independent of CZAR itself.
Symmetric losses conflate two distinct properties of a forecaster, i.e. directional skill and prediction variance.
A model can reduce its MAE or MSE either by becoming more accurate or simply by shrinking the magnitude of its predictions toward the unconditional mean; the L1- and L2-trained models in Table~\ref{tab:model-training}, whose predictions are shrunk roughly an order of magnitude below the truth while their directional accuracy on large moves sits at or below chance, are the practical case of a low-loss but directionally uninformative forecaster.
Evaluation methodology should therefore treat the aspect ratio $\sigma_{\hat{y}}/\sigma_y$ as a first-class diagnostic alongside any loss-based leaderboard.
It should also recognize the averaging convention as part of the loss design, because linear and log averaging induce different breakeven behavior, and asymmetries that are safe under one can become pathological under the other (\S\ref{sec:background}).
Loss-based rankings constructed from symmetric criteria, common in forecasting competitions and production model selection alike, will systematically favor shrunken or degenerate predictors over noisy but directionally informative ones.

Tests of the CZAR loss in this work are naturally non-exhaustive.
However, the central message is nevertheless robust across both the idealized and the trained experiments: when the conditional mean of the target is near zero, the loss function is not a neutral implementation detail but the decisive design choice.
CZAR offers a convex, trainable, and evaluation-consistent loss that removes the zero-returns attractor by construction, at the cost of a characterizable forecast bias.
We argue this compromise is favorable wherever directional information is the economically relevant output.

% Optional acknowledgments section: thank funding sources or contributors who
% do not meet authorship criteria. Uncomment and edit if needed.
% \section*{Acknowledgments}
%

\begin{sloppypar}
\bibliographystyle{ADI}
{\small
\bibliography{bibliography} % There is a file named "bibliography.bib" in the root directory of the repository. Please add your own bibliographic entries there.
}
\end{sloppypar}

%%%%%%%%%%%%%%%%%%
\begin{appendices}

\counterwithin*{figure}{section}
\renewcommand{\thefigure}{\thesection\arabic{figure}}
\renewcommand{\theHfigure}{\thesection\arabic{figure}}

\counterwithin*{equation}{section}
\renewcommand{\theequation}{\thesection\arabic{equation}}
\renewcommand{\theHequation}{\thesection\arabic{equation}}

\section{Reference implementation}
\label{app:code}

The following Python listing implements the base CZAR loss (\S\ref{sec:czar}), gradient, and Hessian using NumPy.
All functions operate on batches (\texttt{y\_true}, \texttt{y\_pred}, and \texttt{std} may be scalars or arrays of matching shape).
The listing includes two numerical guards that are absent from the mathematical formulation of \S\ref{sec:czar:floor}.
The loss-floor normalization is protected against division by zero (\texttt{safe\_norm}), and the loss-floor term is clipped at $10^{-12}$ to guard against floating-point underflow when logarithms of the total loss are taken for evaluation; neither affects the loss away from degenerate parameter values.
The Hessian clip at $10^{-6}$ used in the LightGBM experiments (\S\ref{sec:eval:training}) is applied in the training wrapper, not in this reference implementation.

\begin{lstlisting}
import numpy as np

ALPHA = 1.0   # quadratic regularization strength
TAU   = 0.5   # hinge smoothing width


def _correlated_beta(alpha):
    # Two-term power-law fit to optimal (alpha, beta*) pairs from the breakeven
    # DA heatmap; minimizes the DA required to beat a zero-returns predictor.
    return 4.2 * alpha**0.56 + 27.0 * alpha**2.17


def _correlated_C(alpha):
    # Asymmetric Hill-bell fit (with linear correction) to optimal C per alpha
    # at fixed tau; keeps the log-loss breakeven DA at or above 50% across
    # noise scales, and satisfies C*(0) = 0.
    A, t, p, k, R, m = 7.9, 0.00459, 0.657, 0.684, 2.25, -0.218
    h = alpha**p / (t**p + alpha**p)
    return A * h**k * (1 - h) + (R + m * alpha) * h


def czar_loss(y_true, y_pred, std, mean=0,
              alpha=ALPHA, beta=None, C=None, tau=TAU):
    if beta is None:
        beta = _correlated_beta(alpha)
    if C is None:
        C = _correlated_C(alpha)
    # Standardize to zero mean, unit variance.
    z     = (y_true - mean) / std
    z_hat = (y_pred - mean) / std

    # Unsigned prediction error and true-value magnitude.
    dz = np.abs(z_hat - z)
    a  = np.abs(z)

    # s is the sign of the true value (convention: +1 when z = 0).
    # u = s * z_hat projects the prediction onto the direction of the truth:
    #   u > a  means the prediction overshoots (Region B),
    #   u <= a means it undershoots or is in the wrong direction (Region A).
    s = np.where(z == 0, 1.0, np.sign(z))
    u = s * z_hat

    # Adaptive discount: b_eff -> 1 as a -> 0 (both regions become quadratic),
    # b_eff -> 0 as a -> inf (Region B vanishes, Region A becomes MAE-like).
    b_eff = 1.0 / (1.0 + beta * a)

    # Region A: MAE + quadratic.  Region B: discounted quadratic only.
    L_A = (1 - b_eff) * dz + 0.5 * alpha * dz**2
    L_B = b_eff * 0.5 * alpha * dz**2

    # --- Loss floor ---
    # L0 is the base loss when z_hat = 0 (always in Region A since u = 0 <= a).
    # deficit = C - L0: positive when the zero-prediction loss is below the
    # target C, meaning additional floor loss is required.
    L0      = (1 - b_eff) * a + 0.5 * alpha * a**2
    deficit = C - L0

    # hinge(x, tau) ~ max(x, 0) for small tau (smooth rectification).
    # Normalized by hinge(C, tau) so that L_floor = C when deficit = C
    # (i.e. at a = 0, where L0 = 0) and L_floor -> 0 when L0 >> C.
    hinge     = 0.5 * (deficit + np.sqrt(deficit**2 + tau**2))
    norm      = 0.5 * (C       + np.sqrt(C**2       + tau**2))
    safe_norm = np.where(norm > 0, norm, 1.0)
    L_floor   = np.where(C > 0, np.maximum(C * hinge / safe_norm, 1e-12), 0.0)

    return np.where(u > a, L_B, L_A) + L_floor


def czar_gradient(y_true, y_pred, std, mean=0,
                  alpha=ALPHA, beta=None):
    if beta is None:
        beta = _correlated_beta(alpha)
    # L_floor depends only on z (not z_hat), so it contributes no gradient.
    z     = (y_true - mean) / std
    z_hat = (y_pred - mean) / std
    err   = z_hat - z       # signed error
    dz    = np.abs(err)
    a     = np.abs(z)
    s     = np.where(z == 0, 1.0, np.sign(z))
    u     = s * z_hat

    # s_err = sign of the error, used to direct the gradient.
    # Set to 0 exactly at z_hat = z so the gradient is 0 at the minimum,
    # consistent with Region B where G_B -> 0 as dz -> 0.
    s_err = np.where(err == 0, 0.0, np.sign(err))
    b_eff = 1.0 / (1.0 + beta * a)

    # Region A gradient: magnitude (1 - b_eff) + alpha*dz, directed by s_err.
    # Approaches -(1-b_eff)/std as z_hat -> z from below (the gradient step),
    # then 0 from Region B, concentrating reward near the true value.
    G_A = s_err * ((1 - b_eff) + alpha * dz)
    G_B = s_err * b_eff * alpha * dz

    return np.where(u > a, G_B, G_A) / std


def czar_hessian(y_true, y_pred, std, mean=0,
                 alpha=ALPHA, beta=None):
    if beta is None:
        beta = _correlated_beta(alpha)
    z     = (y_true - mean) / std
    z_hat = (y_pred - mean) / std
    a     = np.abs(z)
    s     = np.where(z == 0, 1.0, np.sign(z))
    u     = s * z_hat
    b_eff = 1.0 / (1.0 + beta * a)

    # Region A Hessian: the MAE term (1-b_eff)*dz has zero curvature,
    # so only the quadratic term contributes -> constant alpha.
    # Region B Hessian: b_eff * alpha, which shrinks toward 0 for large |z|.
    # Both are non-negative, confirming convexity in y_pred at fixed y_true.
    H_A = alpha
    H_B = b_eff * alpha

    return np.where(u > a, H_B, H_A) / std**2
\end{lstlisting}

\section{Breakeven directional accuracy}
\label{app:breakeven}

\subsection*{Prediction model and directional accuracy}

We model predicted returns as a linear combination of signal and noise:
\begin{equation}
    \hat{y} = \rho\,y + \sigma_N\,\xi,
    \qquad y \sim \mathcal{N}(0,\sigma^2),
    \quad \xi \sim \mathcal{N}(0,1) \text{ independent},
    \label{eq:pred-model}
\end{equation}
where $\rho \in \mathbb{R}$ is the signal coefficient and $\sigma_N \ge 0$ is the noise scale. The directional accuracy (DA) is the probability of a correct-sign prediction:
\begin{equation}
    \mathrm{DA}(\rho, \sigma_N) = P\!\left(\operatorname{sign}(\hat{y}) = \operatorname{sign}(y)\right).
    \label{eq:da-def}
\end{equation}
The pair $(y, \hat{y})$ is jointly bivariate Gaussian with zero mean, $\mathrm{Cov}(y,\hat{y}) = \rho\sigma^2$, and correlation
\begin{equation}
    r = \frac{\rho\sigma}{\sqrt{\rho^2\sigma^2 + \sigma_N^2}}.
\end{equation}
The sign-concordance formula for zero-mean bivariate Gaussians gives $P(X>0,\,Y>0) = \tfrac{1}{4} + \tfrac{\arcsin r}{2\pi}$, so
\begin{equation}
    \mathrm{DA} = 2\,P(y>0,\,\hat{y}>0) = \frac{1}{2} + \frac{\arcsin r}{\pi}.
\end{equation}
Applying the identity $\arcsin\!\left(x/\sqrt{x^2+1}\right) = \arctan(x)$:
\begin{equation}
    \mathrm{DA}(\rho,\sigma_N) = \frac{1}{2} + \frac{1}{\pi}\arctan\!\left(\frac{\rho\sigma}{\sigma_N}\right).
    \label{eq:da-closed}
\end{equation}
The DA depends only on the signal-to-noise ratio $\mathrm{SNR} = \rho\sigma/\sigma_N$, not on $\rho$ and $\sigma_N$ separately.

\subsection*{Breakeven conditions}

A predictor \emph{breaks even} against a constant-zero forecast when its expected loss equals the baseline loss, either under linear or log averaging:
\begin{align}
    \text{linear:} &\quad \mathbb{E}\!\left[\mathcal{L}(y,\hat{y})\right] = \mathbb{E}\!\left[\mathcal{L}(y,0)\right], \label{eq:be-lin} \\
    \text{log:}    &\quad \mathbb{E}\!\left[\log\mathcal{L}(y,\hat{y})\right] = \mathbb{E}\!\left[\log\mathcal{L}(y,0)\right]. \label{eq:be-log}
\end{align}
For each noise scale $\sigma_N$, the minimum DA satisfying these conditions traces the breakeven curves in Figure~\ref{fig:breakeven_DA}.

\subsection*{Symmetric losses share a universal breakeven curve}

\begin{proposition}\label{prop:symm-breakeven}
For any symmetric loss $\mathcal{L}(y,\hat{y}) = g(|y-\hat{y}|)$ with $g$ strictly increasing and positive, and with $\mathbb{E}[g(|e|)]$ and $\mathbb{E}\!\left[\,|\log g(|e|)|\,\right]$ finite for every centered Gaussian~$e$, the linear and log breakeven conditions~\eqref{eq:be-lin}--\eqref{eq:be-log} are equivalent under model~\eqref{eq:pred-model}, and both reduce to
\begin{equation}
    (1-\rho)^2\sigma^2 + \sigma_N^2 = \sigma^2.
    \label{eq:symm-breakeven}
\end{equation}
\end{proposition}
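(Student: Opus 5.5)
The key observation is that under model~\eqref{eq:pred-model} both the prediction error of the Gaussian linear predictor and the error of the zero predictor are centered Gaussians. Their distributions are therefore determined entirely by their variances.

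For the zero predictor the error is $e_0 = y - 0 = y \sim \mathcal{N}(0,\sigma^2)$. For the linear predictor it is $e = y-\hat{y} = (1-\rho)\,y - \sigma_N\xi$. Since $y$ and $\xi$ are independent Gaussians, $e$ is a centered Gaussian with variance $v(\rho,\sigma_N) = (1-\rho)^2\sigma^2 + \sigma_N^2$. So $e \overset{d}{=} \sqrt{v}\,Z$ and $e_0 \overset{d}{=} \sigma Z$ with $Z\sim\mathcal{N}(0,1)$. Both breakeven conditions only involve the law of $|e|$ versus $|e_0|$, so each compares a single function of a scale parameter at $s=\sqrt{v}$ and at $s=\sigma$.

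Define $\Phi_{\mathrm{lin}}(s) = \mathbb{E}[g(s|Z|)]$ and $\Phi_{\log}(s) = \mathbb{E}[\log g(s|Z|)]$ for $s\ge 0$. The integrability assumptions guarantee both are finite. At $s=0$ we use $g(0)$, which is positive, so $\log g(0)$ is finite. The plan is to show that both functions are strictly increasing on $[0,\infty)$. Then $\Phi(\sqrt{v}) = \Phi(\sigma)$ holds if and only if $\sqrt{v}=\sigma$, that is, $(1-\rho)^2\sigma^2+\sigma_N^2=\sigma^2$, which is \eqref{eq:symm-breakeven}. This holds for either choice of averaging, so the two breakeven conditions are equivalent.

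Strict monotonicity is the one step needing care, and I would prove it pointwise. Fix $0\le s_1<s_2$. For every $Z\neq 0$ we have $s_1|Z|<s_2|Z|$. Since $g$ is strictly increasing, $g(s_1|Z|)<g(s_2|Z|)$. Since $\log$ is strictly increasing on $(0,\infty)$ and $g>0$, also $\log g(s_1|Z|)<\log g(s_2|Z|)$. These inequalities hold almost surely, and both sides are integrable by hypothesis. The expectation of an a.s.\ strictly positive integrable random variable is strictly positive, so $\Phi(s_1)<\Phi(s_2)$ in both cases.

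No differentiability or convexity of $g$ is needed. The main subtlety is ensuring the expectations are finite so the strict inequality is not an $\infty=\infty$ comparison, and the stated integrability conditions supply exactly this. Finally, I would note the consequence for directional accuracy: solving \eqref{eq:symm-breakeven} for the smallest $\rho$ gives $\rho_{\min}=\sigma-\sqrt{\sigma^2-\sigma_N^2}$ (in units where $\sigma=1$, this is $1-\sqrt{1-\sigma_N^2}$). Substituting into \eqref{eq:da-closed} yields the universal curve quoted in \S\ref{sec:background}, with no real solution when $\sigma_N>\sigma$.
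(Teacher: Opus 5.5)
Your proof is correct and follows the paper's argument. Both errors are centered Gaussians with scales $\sqrt{(1-\rho)^2\sigma^2+\sigma_N^2}$ and $\sigma$, and strict monotonicity of $s\mapsto\mathbb{E}[g(s|Z|)]$ and $s\mapsto\mathbb{E}[\log g(s|Z|)]$ forces the two scales to coincide; you justify that monotonicity more carefully than the paper, using the almost-sure strict pointwise inequality plus integrability. One small slip in your closing remark: $\rho$ is dimensionless, so $\rho_{\min}=1-\sqrt{1-(\sigma_N/\sigma)^2}$, whereas your expression $\sigma-\sqrt{\sigma^2-\sigma_N^2}$ equals $\rho_{\min}\sigma$ (which is the combination that enters the DA formula).
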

In particular, the condition does not depend on $g$.

\begin{proof}
The prediction error is $e = \hat{y} - y = (\rho-1)y + \sigma_N\xi \sim \mathcal{N}(0,\,s^2)$ with $s^2 = (1-\rho)^2\sigma^2 + \sigma_N^2$. Writing $e = sZ$ with $Z\sim\mathcal{N}(0,1)$, the loss becomes $g(s|Z|)$. Since $g$ is strictly increasing, both
\begin{equation*}
    h_{\mathrm{lin}}(s) = \mathbb{E}[g(s|Z|)] \qquad \text{and} \qquad
    h_{\mathrm{log}}(s) = \mathbb{E}[\log g(s|Z|)]
\end{equation*}
are strictly monotone in $s$. Each breakeven condition therefore reduces to $s = \sigma$, which is~\eqref{eq:symm-breakeven}.
\end{proof}

\noindent The function $g$ cancels entirely. MSE ($g(r)=r^2$), MAE ($g(r)=r$), Huber loss, and any other symmetric loss meeting these conditions therefore all satisfy the same breakeven condition~\eqref{eq:symm-breakeven}.
Solving Equation~\eqref{eq:symm-breakeven} for the minimum signal coefficient gives $\rho_\mathrm{min} = 1 - \sqrt{1-(\sigma_N / \sigma)^2}$, which substituted into Equation~\eqref{eq:da-closed} yields the universal breakeven curve quoted in \S\ref{sec:background}, $\mathrm{DA} = 0.5 + \arctan(\rho_\mathrm{min}\sigma/\sigma_N)/\pi$, valid for both linear and log averaging simultaneously.

\subsection*{Asymmetric losses can decouple the two breakevens}

Proposition~\ref{prop:symm-breakeven} establishes a hard lower bound: no symmetric loss function can achieve a breakeven curve below~\eqref{eq:symm-breakeven} in either linear or log averaging. For a loss to improve on this bound it must be asymmetric, i.e. it must treat $\operatorname{sign}(y) = \operatorname{sign}(\hat{y})$ differently from $\operatorname{sign}(y) \ne \operatorname{sign}(\hat{y})$.

For an asymmetric loss, the distribution of $\mathcal{L}(y,\hat{y})$ changes shape (not only scale) as $\rho$ varies, because the fraction of samples falling in the correct- and wrong-direction regions depends on $\rho$. The log and linear expectations then weight the distribution differently, and the two breakeven conditions are no longer equivalent.

CZAR exploits exactly this freedom by making its loss depend separately on $\operatorname{sign}(z)$ and $\operatorname{sign}(\hat{z})$, with different functional forms in each region.
As visible in Figure~\ref{fig:czar_breakeven}, CZAR's log-loss breakeven lies substantially below the symmetric bound~\eqref{eq:symm-breakeven}, while its linear-loss breakeven tracks that bound more closely.
The separation between the two breakevens, achievable only through asymmetry, is the central property that motivates the CZAR design.

\section{Pseudo-Huber smoothing}
\label{app:smoothing}

The core formulation of \S\ref{sec:czar} uses a plain MAE distance in Region~A. This produces a step discontinuity in the gradient at $\hat{z}=z$ of magnitude $(1-\beta_{\mathrm{eff}})/\sigma$ (\S\ref{sec:czar:derivatives}). Setting $\varepsilon > 0$ replaces the MAE term with a pseudo-Huber distance, smoothing the step into a continuous zero-crossing. This appendix describes the modifications required.

\subsection*{Modified quantities}

Introduce a smoothing scale $\varepsilon_{\mathrm{eff}}(|z|) = \varepsilon |z|$ proportional to the true-value magnitude, and define the pseudo-Huber effective distance
\begin{equation}
    \Delta z_{\mathrm{eff}} = \sqrt{1+\varepsilon^2}\Bigl(\sqrt{\Delta z^2 + \varepsilon_{\mathrm{eff}}^2} - \varepsilon_{\mathrm{eff}}\Bigr).
    \label{eq:dzeff}
\end{equation}
When $\varepsilon=0$, $\Delta z_{\mathrm{eff}} = \Delta z$ (MAE). For small errors ($\Delta z \ll \varepsilon_{\mathrm{eff}}$), $\Delta z_{\mathrm{eff}} \approx \sqrt{1+\varepsilon^2}\,\Delta z^2/(2\varepsilon |z|)$ (quadratic); for large errors ($\Delta z \gg \varepsilon_{\mathrm{eff}}$), $\Delta z_{\mathrm{eff}} \approx \sqrt{1+\varepsilon^2}\,\Delta z$ (linear, with boosted slope). Because $\Delta z_{\mathrm{eff}} < \Delta z$ for small errors at fixed $\varepsilon > 0$, an amplitude boost factor
\begin{equation}
    b(|z|) = 1 + 2\varepsilon\bigl(1-\beta_{\mathrm{eff}}(|z|)\bigr) \ge 1
    \label{eq:boost}
\end{equation}
is introduced to compensate.
Without it, increasing $\varepsilon$ would reduce the Region~A loss for moderate errors relative to the $\varepsilon=0$ case.

\subsection*{Modified base loss}

Replace $(1-\beta_{\mathrm{eff}})\,\Delta z$ in Region~A of Equation~\eqref{eq:czar-base} with $b\,(1-\beta_{\mathrm{eff}})\,\Delta z_{\mathrm{eff}}$:
\begin{equation}
    \mathcal{L}_{\mathrm{base}}^{(\varepsilon)}(z, \hat{z}) =
    \begin{cases}
        b\,(1-\beta_{\mathrm{eff}})\,\Delta z_{\mathrm{eff}} + \dfrac{\alpha}{2}\,\Delta z^2
            & \text{(Region A)}, \\[10pt]
        \beta_{\mathrm{eff}}\,\dfrac{\alpha}{2}\,\Delta z^2
            & \text{(Region B)}.
    \end{cases}
    \label{eq:czar-base-eps}
\end{equation}
Region~B is unchanged.

\subsection*{Modified gradient and Hessian}

The Region~A gradient becomes
\begin{equation}
    \frac{\partial \mathcal{L}_{\mathrm{base}}^{(\varepsilon)}}{\partial \hat{y}}\Bigg|_{\text{Region A}}
    = \frac{s_\Delta}{\sigma}\left[
        b\,(1-\beta_{\mathrm{eff}})\,\sqrt{1+\varepsilon^2}\,\frac{\Delta z}{\sqrt{\Delta z^2+\varepsilon_{\mathrm{eff}}^2}}
        + \alpha\,\Delta z
    \right].
    \label{eq:grad-eps}
\end{equation}
The factor $\Delta z / \sqrt{\Delta z^2 + \varepsilon_{\mathrm{eff}}^2} \to 0$ as $\Delta z \to 0$ (for $\varepsilon > 0$), so the gradient vanishes continuously at $\hat{z}=z$. The Region~A Hessian gains a curvature term:
\begin{equation}
    \frac{\partial^2 \mathcal{L}_{\mathrm{base}}^{(\varepsilon)}}{\partial \hat{y}^2}\Bigg|_{\text{Region A}}
    = \frac{1}{\sigma^2}\left[
        b\,(1-\beta_{\mathrm{eff}})\,\sqrt{1+\varepsilon^2}\,
        \frac{\varepsilon_{\mathrm{eff}}^2}{\bigl(\Delta z^2 + \varepsilon_{\mathrm{eff}}^2\bigr)^{3/2}}
        + \alpha
    \right].
    \label{eq:hess-eps}
\end{equation}
This is maximized at $\Delta z = 0$ (where it equals $\tfrac{1}{\sigma^2}\!\left[b(1-\beta_{\mathrm{eff}})\sqrt{1+\varepsilon^2}/\varepsilon_{\mathrm{eff}} + \alpha\right]$) and decays to $\alpha/\sigma^2$ as $\Delta z \to \infty$, recovering the constant Hessian of the $\varepsilon=0$ case. The Region~B gradient and Hessian are unchanged.

\subsection*{Modified loss floor}

At $\hat{z}=0$, the pseudo-Huber effective distance evaluates to
\begin{equation}
    \Delta z_{0,\mathrm{eff}} = |z|\bigl(1 + \varepsilon^2 - \varepsilon\sqrt{1+\varepsilon^2}\bigr),
    \label{eq:dz0eff}
\end{equation}
obtained by substituting $\Delta z = |z|$ and $\varepsilon_{\mathrm{eff}} = \varepsilon |z|$ into Equation~\eqref{eq:dzeff}. The zero-prediction base loss in Equation~\eqref{eq:L0} becomes
\begin{equation}
    \mathcal{L}_0^{(\varepsilon)}(z) = b\,(1-\beta_{\mathrm{eff}})\,\Delta z_{0,\mathrm{eff}} + \frac{\alpha}{2}\,z^2,
    \label{eq:L0-eps}
\end{equation}
which replaces $(1-\beta_{\mathrm{eff}})\,|z|$ from the $\varepsilon=0$ case. The loss-floor formulas~\eqref{eq:hinge}--\eqref{eq:lfloor} are otherwise unchanged, with the deficit redefined as $d(z) = C - \mathcal{L}_0^{(\varepsilon)}(z)$.

\subsection*{Convexity}

The Region~A Hessian~\eqref{eq:hess-eps} satisfies $H_A \ge \alpha > 0$ and the Region~B Hessian remains $\beta_{\mathrm{eff}}\alpha > 0$ (since $\alpha>0$ and $\beta_{\mathrm{eff}}\in(0,1]$).
For $\varepsilon>0$ the gradient now vanishes continuously from both sides at $\hat{z}=z$ (no step discontinuity), and (writing $s=\operatorname{sign}(z)$) has sign $-s$ on the Region~A side of the region boundary and $+s$ on the Region~B side (i.e.\ it is negative for $\hat{z}<z$ and positive for $\hat{z}>z$ when $z>0$, with the sides reversed when $z<0$).
Convexity in $\hat{y}$ at fixed $y$ is preserved for all $\varepsilon \ge 0$.
On each side of the region boundary the derivative is continuous and non-decreasing in $\hat{y}$, and any residual one-sided jump at $\hat{z}=z$ ($\ge 0$ at $\varepsilon=0$, exactly $0$ for $\varepsilon>0$) is upward.

\section{Parameter optimization}
\label{app:optimization}

The prediction model and breakeven directional accuracy are defined in \S\ref{sec:background}.
Here we apply them to the CZAR loss to determine the correlated defaults $\beta^*(\alpha)$ and $C^*(\alpha)$, along with the constant hinge width $\tau$ adopted across the working range.

The floor parameters $C$ and $\tau$ are both optimized by the requirement that the log-loss breakeven DA stays at or above $50\%$ at all noise scales (see \S\ref{sec:background}).
A breakeven below $50\%$ would mean a model that predicts consistently in the wrong direction achieves a lower loss than a zero-returns predictor.
However, $C$ and $\tau$ act on the floor term in a degenerate way and cannot be optimized jointly from the breakeven condition alone.
We therefore fix $\tau$ first from an independent criterion, and then optimize $C$ at the chosen $\tau$.

\subsection*{Optimizing $\beta$}

\begin{figure}[t]
    \centering
    \includegraphics[width=0.65\textwidth]{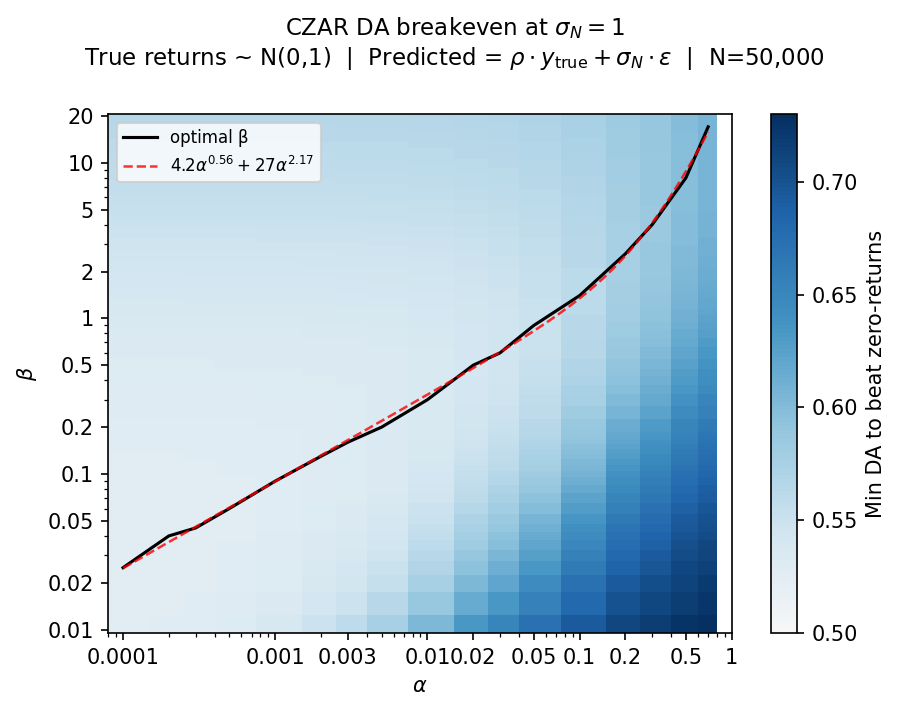}
    \caption{CZAR breakeven directional accuracy at $\sigma_N=1$ as a function of $\alpha$ and $\beta$ (both axes log-scale). Color indicates the minimum DA required to beat a zero-returns predictor under linear loss averaging; lower (redder) is better. The black curve marks the optimal $\beta^*(\alpha)$ and the red dashed curve the two-term power-law fit from Equation~\eqref{eq:correlated-beta}. The shallow basin around the optimum shows that the default is robust to moderate departures from $\beta^*$.}
    \label{fig:czar-beta-opt}
    \end{figure}
    
The breakeven DA is evaluated numerically at noise scale $\sigma_N = 1$ over $N=50{,}000$ samples from the Gaussian linear model ($y \sim \mathcal{N}(0,1)$, $\hat{y} = \rho\,y + \sigma_N\,\xi$).
Figure~\ref{fig:czar-beta-opt} shows the resulting breakeven DA as a joint function of $\alpha$ and $\beta$.
For each $\alpha$, the value $\beta^*(\alpha)$ that minimizes the breakeven DA is traced by the black curve.
The loss surface is shallow near this optimum, i.e. neighboring $(\alpha,\beta)$ pairs achieve nearly identical breakeven DA, so the default is robust to the exact gridding choice for $\beta$.

The two-term power-law fit (Equation~\eqref{eq:correlated-beta}, red dashed curve) closely tracks the numerical optimum across more than three decades of $\alpha$.
We restrict the fit to $\alpha \le 0.7$.
Beyond this the breakeven basin in $\beta$ becomes very shallow and the optimum drifts toward arbitrarily large values, so a single $\beta^*(\alpha)$ ceases to be a meaningful target and any sufficiently large $\beta$ achieves consistently low breakeven DA.
For practical use we recommend $\alpha \le 1$, with $\beta^*$ taken from Equation~\eqref{eq:correlated-beta} (a mild extrapolation in $(0.7, 1]$).
What is also clear from the figure is that the breakeven directional accuracy converges for $\alpha \lesssim 0.005$, and therefore cannot be improved by using arbitrarily small $\alpha$ values.

\subsection*{Optimizing $\tau$}

We fix $\tau$ by requiring that the expected log-loss $E_{y}[\log_{10}\mathcal{L}(y,\hat{y})]$, integrated over $y\sim\mathcal{N}(0,1)$, is monotonic in $\hat{z}$ for $\hat{z}\ge 0$ (and by symmetry for $\hat{z}\le 0$). A non-monotonic profile would imply that some non-trivial predicted return is preferred by the loss in expectation, biasing the optimizer toward that scale.

\begin{figure}[t]
\centering
\includegraphics[width=\textwidth]{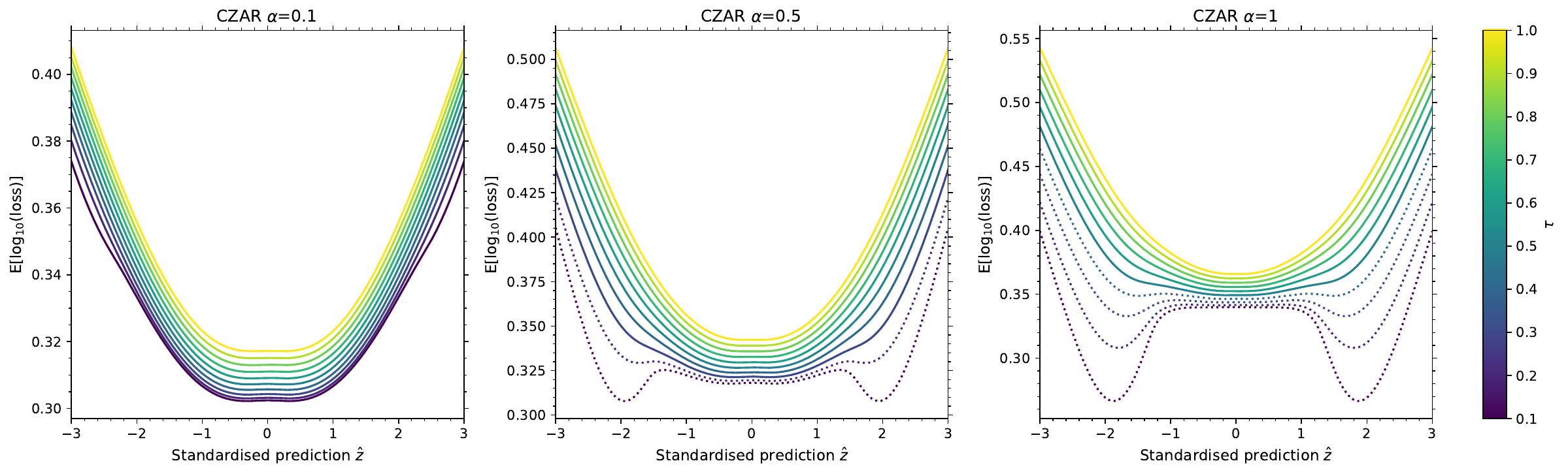}
\caption{Expected log-loss $E_{y\sim\mathcal{N}(0,1)}[\log_{10}\mathcal{L}(y,\hat{y})]$ as a function of standardized prediction $\hat{z}$ for $\tau\in[0.1,1.0]$ (color) at three $\alpha$ values, with $C=2$ and $\beta=\beta^*(\alpha)$. Solid curves are monotonic in $|\hat{z}|$ and define admissible $\tau$ values; dotted curves develop spurious minima near $\hat{z}\approx\pm 2$ and are excluded. The smallest admissible $\tau$ grows with $\alpha$; $\tau=0.5$ is admissible across $\alpha\le 1$ and is adopted as the default.}
\label{fig:czar-tau-opt}
\end{figure}

Figure~\ref{fig:czar-tau-opt} sweeps $\tau \in [0.1,1.0]$ at fixed $C=2$ and $\beta=\beta^*(\alpha)$ for three representative $\alpha$ values.
For each $\alpha$ there is a threshold $\tau_{\min}(\alpha)$ below which the curves develop shallow minima near $\hat{y}\approx\pm 2$ (dotted lines, excluded); for $\tau \ge \tau_{\min}(\alpha)$ the curves are monotonic (solid).
The threshold grows with $\alpha$.
At $\alpha=0.1$ values as low as $\tau\approx 0.1$ remain monotonic, whereas at $\alpha=1$ we require $\tau\gtrsim 0.5$.
We adopt $\tau=0.5$ as a constant default, i.e. the smallest value that remains monotonic across the working range $\alpha\le 1$.
Smaller $\tau$ could in principle be used at small $\alpha$, but the expected-loss profile is increasingly insensitive to $\tau$ there, so a single constant is preferred for simplicity.

\subsection*{Optimizing $C(\alpha)$}

\begin{figure}[t]
\centering
\includegraphics[width=\textwidth]{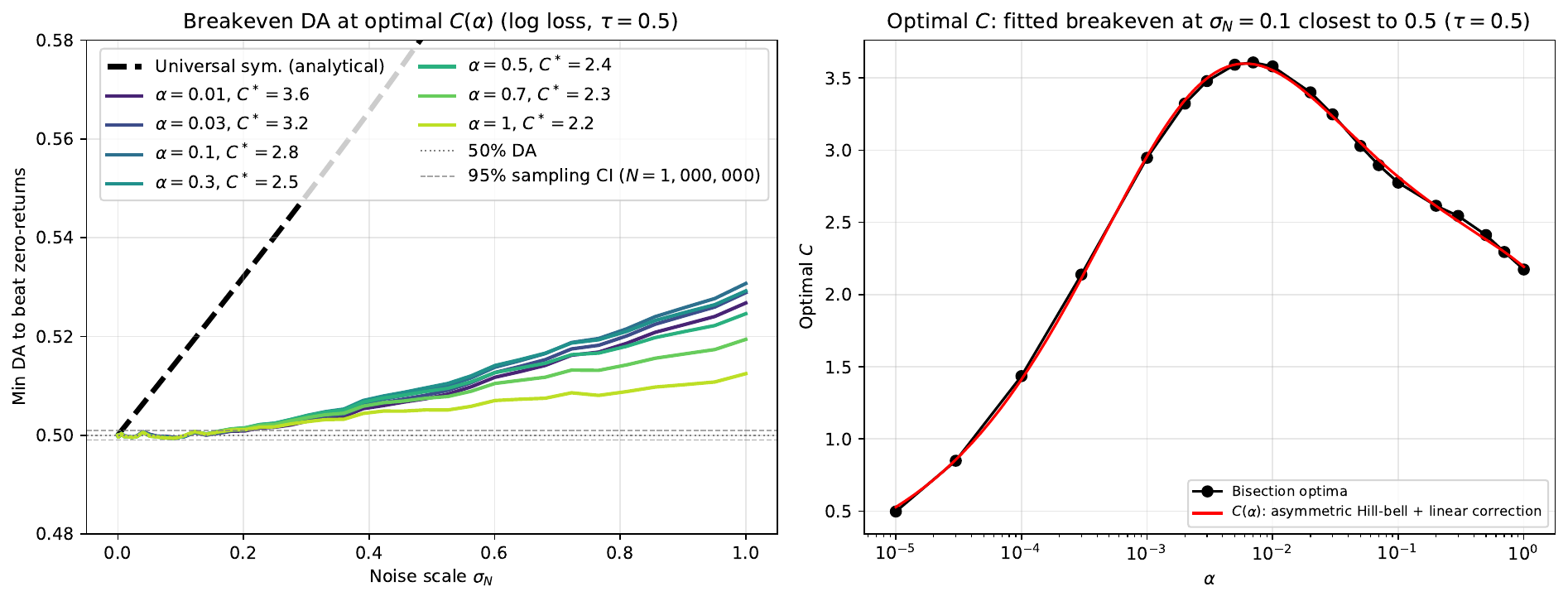}
\caption{Optimization of the loss floor parameter $C$. The left panel shows log-loss breakeven DA versus $\sigma_N$ at the optimized $C^*(\alpha)$ (color), with $\tau=0.5$ and $\beta=\beta^*(\alpha)$. The universal symmetric (MSE) breakeven (thick dashed black) is shown for reference; the dashed gray lines bound the $95\%$ binomial CI for $\mathrm{DA}=0.5$ at $N=10^6$. The right panel shows optimal $C^*(\alpha)$ (black points) and the asymmetric Hill-bell fit of Equation~\eqref{eq:correlated-C} (red line). $C^*$ is selected as the value for which a linear fit to the breakeven DA in a small window around $\sigma_N=0.1$ equals $0.5$.}
\label{fig:czar-C-opt}
\end{figure}

With $\tau$ fixed, $C$ is determined by requiring the log-loss breakeven DA to equal $50\%$ at a reference noise scale $\sigma_N=0.1$.
Operationally, for each $\alpha$ we evaluate the breakeven DA on a small $\sigma_N$ window around $0.1$, fit a line to those points, and bisect on $C$ until the linear fit at $\sigma_N=0.1$ equals $0.5$.
Targeting the fit rather than a single sample point suppresses binomial sampling noise (the $95\%$ CI for $\mathrm{DA}=0.5$ at $N=10^6$ is shown by the dashed gray lines in the left panel of Figure~\ref{fig:czar-C-opt}).
The breakeven curves at the resulting $C^*(\alpha)$ cluster tightly around $0.5$ across all $\sigma_N$, confirming that this single-point calibration produces a near-flat profile.

The right panel of Figure~\ref{fig:czar-C-opt} shows the resulting $C^*(\alpha)$.
The points rise from $C^*=0$ at $\alpha=0$ to a peak near $\alpha\sim 10^{-2}$, followed by a slow decline at larger $\alpha$.
We fit them with an asymmetric Hill-bell model with a linear correction, which captures both the zero limit and the high-$\alpha$ tail while remaining smooth across the transition.
The fitted coefficients give Equation~\eqref{eq:correlated-C}, repeated here for convenience:
\begin{equation*}
    C^*(\alpha) = A\, h(\alpha)^k\, [1-h(\alpha)] + (R + m\alpha)\, h(\alpha),
    \qquad h(\alpha) = \frac{\alpha^p}{t^p + \alpha^p},
\end{equation*}
with $(A, t, p, k, R, m) = (7.90,\, 4.59\times 10^{-3},\, 0.657,\, 0.684,\, 2.25,\, -0.218)$.

\end{appendices}

\end{document}